\documentclass{article}

\PassOptionsToPackage{numbers, compress}{natbib}

\usepackage[preprint]{neurips_2026}

\usepackage[utf8]{inputenc} 
\usepackage[T1]{fontenc}    
\usepackage{hyperref}       
\usepackage{url}            
\usepackage{booktabs}       
\usepackage{amsfonts}       
\usepackage{amsmath,amssymb,mathtools,bm}
\usepackage{amsthm}
\hypersetup{hidelinks}
\usepackage{nicefrac}       
\usepackage{microtype}      
\usepackage{xcolor}         
\usepackage{graphicx}
\usepackage{algorithm,algpseudocode}
\usepackage{multirow,makecell}
\usepackage{caption}
\usepackage{subcaption}
\usepackage{wrapfig}
\usepackage{tikz}
\usepackage{fontawesome5}
\usepackage{enumitem}
\usetikzlibrary{positioning,arrows.meta,calc,fit}

\newtheorem{theorem}{Theorem}
\newtheorem{proposition}{Proposition}
\newtheorem{definition}{Definition}
\newtheorem{assumption}{Assumption}
\newtheorem{corollary}{Corollary}

\newcommand{\softmax}{\mathrm{softmax}}

\newcommand{\UAT}{\text{UAT}}

\newcommand{\R}{\mathbb{R}}

\newcommand{\diag}{\mathrm{diag}}
\newcommand{\LN}{\mathrm{LN}}

\newcommand{\TF}{\text{TF}}

\newcommand{\obs}{\mathrm{obs}}

\newcommand{\rank}{\operatorname{rank}}
\newcommand{\col}{\operatorname{col}}
\newcommand{\row}{\operatorname{row}}

\title{Breaking Feedback-Blindness: Utility-Augmented Transformer for Sequential Decision Making}

\author{
Yuyang Shen$^{1}$ \quad Shan Dai$^{2, 1, *}$ \quad Daimin Chen$^{3}$ \\
$^{1}$The Chinese University of Hong Kong, Shenzhen \\
$^{2}$Shenzhen Research Institute of Big Data \\
$^{3}$University of International Business and Economics \\
\texttt{yuyangshen@link.cuhk.edu.cn} \quad
\texttt{daishan@cuhk.edu.cn} \quad
\texttt{{202520210806}@uibe.edu.cn}
}

\begin{document}
\maketitle

\begingroup
\renewcommand{\thefootnote}{*}
\footnotetext{Corresponding author.}
\endgroup

\begin{abstract}

Sequential decision making in non-stationary and partially observable environments requires rapid adaptation to latent regime changes. However, existing Transformer decision models face a structural bottleneck in the retrieval mechanism: even when reward is used for training or exposed as an input token, attention retrieval remains primarily driven by observation-derived similarity. We formalize this limitation as \textbf{feedback-blind retrieval}, and formally show that, on feedback-informative tasks, observation-equivalent histories with different action--reward outcomes cannot be distinguished by any observation-only attention, resulting in suboptimal choice. To address this mismatch, we propose the \textbf{Utility-Augmented Transformer} (UAT), a new feedback-conditioned retrieval attention architecture in which a compact utility state modulates the query, key, and value projections, allowing action--reward history to directly alter context retrieval during the forward pass. UAT also enjoys an exact zero-gate degradation property that recovers the Vanilla Transformer when feedback is uninformative. Under finite-horizon compactness and Lipschitz assumptions, we prove that UAT strictly enlarges the observation-only Transformer class and can uniformly approximate feedback-dependent decision maps. Across four non-stationary benchmarks---synthetic navigation with hidden goal shifts, non-stationary sepsis treatment, cross-market portfolio allocation, and delayed-feedback recommendation, UAT consistently improves performance over observation-only, test-time adaptation, and input-level feedback baselines, with particularly large gains in noisier regimes that require stronger adaptation.



\end{abstract}

\section{Introduction}
\label{sec:intro}

Sequential decision making is central to reinforcement learning and to a broad range of applications, including clinical treatment planning~\citep{komorowski2018ai}, financial portfolio management~\citep{ye2020}, recommender systems~\citep{afsar2022reinforcement}, robotics~\citep{levine2016end}, and autonomous control~\citep{kiran2021deep}. In recent years, Transformer architectures~\citep{vaswani2017attention} have emerged as a compelling backbone for decision models because of their strong sequence modeling capacity and their ability to aggregate long-range contextual evidence. Decision Transformer (DT)~\citep{chen2021decision} and its extensions~\citep{zheng2022odt,wu2023edt,schmied2024radt} have shown that many problems can be profitably reframed as sequence modeling, stimulating a large literature on Transformer-based reinforcement learning. 
	
Despite these advances, non-stationary and partially observable environments remain difficult~\citep{cheung2020reinforcement}. The challenge is not only that the distribution shifts, but also that the latent regime governing action effectiveness may not be identifiable from the observation sequence alone~\citep{wilder2019melding, vlastelica2020differentiation}. In many realistic settings, two episodes can exhibit nearly identical observations while requiring different decisions because prior actions produced different reward feedback~\citep{elmachtoub2022spo, donti2017task, mandi2024dfl}. 
	
This difficulty exposes a structural mismatch~\citep{kaelbling1998pomdp} in many decision architectures. Existing models often use reward either as a training signal~\citep{schulman2017ppo,kostrikov2022offline} or as an input token~\citep{chen2021decision, schmied2024radt}, but the retrieval rule that governs how the model accesses historical context is still predominantly shaped by observation-derived attention~\citep{parisotto2020stabilizing, kang2018sasrec}. When the frozen forward computation induces identical context retrieval for observation-matched histories, downstream supervision cannot distinguish them within the same forward pass~\citep{rumelhart1986learning, bengio2013representation}. This is the core phenomenon we call \emph{feedback-blind retrieval}. To make this precise, we define \emph{feedback-informative decision processes}: tasks where there exist two histories with identical observation sequences but different past action--reward outcomes, and these histories induce different optimal decisions. The definition isolates the regime-identification setting~\citep{zintgraf2020varibad} in which feedback is not merely useful but structurally necessary. We then define \emph{feedback-blind retrieval} as invariance of the retrieval state to past action--reward feedback when observations are held fixed. 

This structural mismatch has motivated a growing body of work aimed at incorporating reward and action information into decisions~\citep{arulkumaran2017brief}. Early efforts, such as the Decision Transformer (DT)~\citep{chen2021decision} and its variants~\citep{zheng2022odt,wu2023edt,schmied2024radt}, introduce return-to-go actions and rewards as additional input tokens. These tokens can influence attention scores, but the projection operators defining the retrieval kernel remain fixed and are not themselves conditioned on feedback, so observation and feedback interactions must share the same bilinear retrieval pathway. Similarly, gradient-based RL training~\citep{sutton2018reinforcement} and test-time adaptation methods~\citep{wang2021tent} can update network parameters to reflect new feedback, but because these updates occur outside the forward pass, the model's retrieval state during inference remains blind to the within-episode action--reward history~\citep{wang2022cotta}. In other words, no matter how sophisticated the training signal or the parameter update rule is, a frozen forward pass that retrieves context based solely on observation features cannot distinguish histories with identical observations but divergent feedback outcomes~\citep{cover2006elements}.

We therefore identify retrieval—not merely representation or training—as the fundamental bottleneck. To break this bottleneck, what is needed is not another auxiliary loss~\citep{donti2017task,mandi2024dfl} or a richer input tokenization~\citep{chen2021decision}, but a structural mechanism that allows the attention kernel itself to be directly conditioned on within-episode action--reward history. Motivated by this distinction, we propose the Utility-Augmented Transformer (UAT). UAT introduces a lightweight utility encoder that maps shifted action--reward pairs to a compact utility state. This state conditions the query, key, and value projections through additive and multiplicative modulation pathways~\citep{ha2017hypernetworks,perez2018film}, thereby allowing feedback to directly reshape the attention kernel during the forward pass. 


\textbf{Contributions.}
The major contributions of this work are as follows.
\begin{itemize}[leftmargin=1.5em,itemsep=0.2em]

    \item \textbf{Feedback-blind retrieval formulation.}
		We identify a class of \emph{feedback-informative} decision processes in which optimal actions cannot be identified from observations alone. On the model side, we define \emph{feedback-blind retrieval} and prove that observation-only attention cannot distinguish histories with identical observations but different action--reward outcomes.
    
    \item \textbf{Utility-Augmented Transformer.} We propose UAT, a new feedback-conditioned retrieval architecture that promotes the attention query, key, value projections to utility-affine functions of an action--reward state, while preserving compatibility with Vanilla Transformers when the utility gates are zero.
    
   	\item \textbf{Expressiveness and approximation theory.} Theoretically, we prove that UAT strictly generalizes the observation-only Transformer function class and, under explicit assumptions, can uniformly approximate any continuous feedback-dependent Lipschitz decision map.
        
		
        
    \item \textbf{Empirical evaluation.} We perform a comprehensive empirical study on four heterogeneous benchmarks: UAT consistently outperforms baselines without incurring substantial computational overhead. Boundary analysis shows degradation as feedback weakens. Ablations isolate the role of each component; and mechanism analysis reveals interpretable modulation functional patterns.
\end{itemize}


\section{Related Work}


\textbf{Observation-Only Transformers.} 
Observation-only Transformer refers to Transformers that preserve the core attention-retrieval structure, retrieving context through observation-derived queries and keys. A Vanilla Transformer~\citep{vaswani2017attention} with a downstream action head is the simplest backbone of this family~\citep{yun2020transformers,dai2019transformerxl,li2023transformer_rl}.  In sequential decision making, we use its causal form~\citep{melnychuk2022causal}: the architecture is unchanged, but attention is masked so that decisions depend only on existing information, preventing leakage from future information. The shared architecture is as follows. At step $t$, each observation feature vector $o_j\in\mathcal{O}$ for $j\le t$ is embedded as:
\begin{equation}
  x_j = \mathrm{LN}\!\bigl(W^E\,o_j + p_j\bigr),
  \label{eq:embed}
\end{equation}
where $W^E\in\mathbb{R}^{d\times d_o}$ is a learnable matrix, $p_j\in\mathbb{R}^d$ is the positional encoding, $\mathrm{LN}$ denotes pre-normalization. For the current embedded vector $x_t$ and each historical $x_j$ ($j\le t$), attention computes:
\begin{align}
  q_t &= W_Q x_t,\quad k_j = W_K x_j,\quad v_j = W_V x_j, \label{eq:qkv}\\
  h_t^{(l)} &= \sum_{j=1}^{t}\softmax_j\!\Bigl(\tfrac{\langle q_t,k_j\rangle}{\sqrt{d_m}}\Bigr)\, v_j, \label{eq:attn-inner}
\end{align}
where $W_Q, W_K, W_V\in\mathbb{R}^{d_m\times d}$ are called projection matrices and $\langle\cdot,\cdot\rangle$ is the Euclidean inner product. Multi-head attention~\citep{vaswani2017attention} runs this in parallel across heads with independent projections, and each block follows it with a residual connection~\citep{he2016deep} and a position-wise FFN~\citep{rumelhart1986learning} . $h_t^{(l)} \in \mathbb{R}^{d}$ is the output vector at layer $l$, which is passed to the next layer as input. After $L$ layers, the final state $h_t^{(L)}\in\mathbb{R}^{d}$ is passed to the decision head and the model is trained with cross-entropy~\citep{shannon1948mathematical}.

GTrXL~\citep{parisotto2020stabilizing} makes it more stable: it adds gated residual connections and a memory mechanism so that the model can preserve information over long horizons and train more reliably. However, GTrXL still receives the same observation-token sequence as Vanilla-TF. Its attention scores are computed from observation features only. Although rewards affect the parameters through the training loss, they do not form a within-episode signal that changes how the frozen model retrieves past context~\citep{bengio2013representation}.

\textbf{Input-Level Feedback Injection Models.} Decision Transformer (DT)~\citep{chen2021decision} incorporates feedback at the input level by casting control as conditional sequence modeling. At each step, it forms an interleaved sequence of return-to-go, observation, and action tokens, $(\hat{R}_t, o_t, a_t)_{t=1}^T$, and processes the resulting trajectory with a standard causal Transformer~\citep{melnychuk2022causal}. Later variants improve this paradigm along different aspects: ODT~\citep{zheng2022odt} introduces online finetuning with stochastic policies; EDT~\citep{wu2023edt} adapts the effective history length at inference to support trajectory stitching; and RA-DT~\citep{schmied2024radt} augments the context with retrieved sub-trajectories from an external memory. These methods enrich the context procedure, but largely retain the standard attention kernel, so the role of feedback in their retrieval remains structurally limited.

\textbf{Adaptation or Conditioning Outside the Retrieval Kernel.} Another related line changes model behavior without modifying the attention retrieval rule. Test-time adaptation methods such as TENT~\citep{wang2021tent} and CoTTA~\citep{wang2022cotta} update parameters or normalization statistics during deployment to handle distribution shift, but their adaptation signal is usually an unsupervised prediction objective, and the attention kernel remains fixed. Feature-conditioning methods such as FiLM~\citep{perez2018film}, AdaIN~\citep{huang2017adain}, and SPADE~\citep{park2019spade} inject external signals by modulating intermediate activations. They demonstrate the value of conditional computation, but operate on hidden features after retrieval.


\section{Problem Setup and Structural Mismatch}
\label{sec:mismatch}

\begin{definition}[Non-stationary POMDP]\label{def:pomdp}
A non-stationary POMDP is a tuple $\mathcal{M}=(\mathcal{S},\mathcal{A},\mathcal{O},\{P_t\}_{t\ge1},\{R_t\}_{t\ge1},\Omega_t,\gamma)$, where $P_t$ and $R_t$ are transitions and reward functions. At each step $t$, with latent state $s_t \in \mathcal{S}$ (unobserved), the agent observes $o_t \sim \Omega_t(\cdot \mid s_t)$, takes action $a_t \in \mathcal{A}$, and receives reward $r_t = R_t(s_t,a_t)$. Decisions are made from the observable history $\mathcal{H}_t$:
\begin{equation}
  \mathcal{H}_t
  = \bigl\{(o_1, a_1, r_1), \ldots, (o_{t-1}, a_{t-1}, r_{t-1}), o_t\bigr\}.
  \label{eq:history}
\end{equation}
A history-dependent policy $\pi: \mathcal{H}_t \to \Delta(\mathcal{A})$ maps histories to action distributions. Under non-stationarity, the optimal policy is itself time-varying:
\begin{equation}
  \pi_t^*(\cdot \mid \mathcal{H}_t)
  \in
  \arg\max_{\pi}\;
  \mathbb{E}_M^\pi\!\left[
    \sum_{k=t}^{T} \gamma^{k-t} r_k
    \;\middle|\;
    \mathcal{H}_t
  \right]
  \label{eq:ns-opt}
\end{equation}
\end{definition}

Definition~\ref{def:pomdp}  highlights that under non-stationarity, the relevant uncertainty is often \emph{regime uncertainty}: the same observation may have different control implications depending on how the environment has responded to previous actions~\citep{cheung2020reinforcement}. This is the setting in which action--reward feedback becomes decision-critical. We next formalize the task-side property that motivates this work.

\subsection{Feedback-Informative Tasks and Feedback-Blind Retrieval}\label{sec:fed-inform}
	

\begin{definition}[Feedback-Informative Decision Process]
\label{def:feedback-informative}
A non-stationary POMDP $\mathcal{M}$ is \emph{feedback-informative at time $t$} if there exist two  histories $\mathcal{H}_t$ and $\mathcal{H}'_t$ as defined in \eqref{eq:history}
that share the same observation sequence $\{o_j\}_{j\le t}$ but differ in past action--reward outcomes, such that
\begin{equation}
\pi_t^*(\cdot \mid \mathcal{H}_t)
\neq
\pi_t^*(\cdot \mid \mathcal{H}'_t).
\end{equation}
\end{definition}
Definition~\ref{def:feedback-informative} does not merely restate partial observability. It isolates a stronger phenomenon: \emph{feedback-relevant aliasing}: even after conditioning on entire observation sequence, optimal actions may remain unidentified unless the model knows how environment responded to previous actions. This is the key property of regime-shifting tasks, where observation-equivalent histories must be treated differently.

\begin{definition}[Feedback-blind retrieval]\label{def:feedback-blind}
		For a sequential decision model with retrieval representation $R_t(\mathcal{H}_t)$ at time $t$, we say that the model admits \emph{feedback-blind retrieval} if for any two histories $\mathcal{H}_t$ and $\mathcal{H}'_t$ in \eqref{eq:history} that share the same observation sequence $\{o_j\}_{j\le t}$,
		\begin{equation}
		R_t(\mathcal{H}_t)=R_t(\mathcal{H}'_t).
		\end{equation}
\end{definition}
Definition~\ref{def:feedback-blind} is a structural property of the \emph{forward-pass retrieval mechanism}, not a statement about how the parameters were trained. A model may be trained with reward-based objectives and yet remain feedback-blind at inference if its frozen retrieval state is still solely a function of observations.


\subsection{Why Observation-Driven Retrieval Fails}

	\begin{proposition}[Observation-only Transformers are feedback-blind]
		\label{prop:obs_blind}
		Consider any causal Transformer policy whose input embeddings, attention blocks, and task head depend only on the observation sequence $\{o_j\}_{j\le t}$ at a fixed parameter value $\theta$. Then for any two histories $\mathcal{H}_t,\mathcal{H}_t'$ with identical observation sequence $\{o_j\}_{j\le t}$, the final retrieval representation satisfies
		$
		R_t(\mathcal{H}_t)=R_t(\mathcal{H}_t').
		$
		Hence the model admits feedback-blind retrieval.
	\end{proposition}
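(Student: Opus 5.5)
The plan is to show that every intermediate quantity is a function of the observation sequence alone, by induction over layers. We first fix the parameter value $\theta$ and write the forward pass at time $t$ as a composition of maps. The embedding layer \eqref{eq:embed} sends $(o_j)_{j\le t}$ to $X^{(0)}=(x_1,\dots,x_t)$. Each $x_j=\LN(W^E o_j+p_j)$ depends only on $o_j$, the position $j$, and $\theta$. So $X^{(0)}$ is a deterministic function $F_0(o_{1:t};\theta)$ and contains no action or reward entries.

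Next we induct on the layer index $l$. Suppose the hidden states $X^{(l-1)}=(h^{(l-1)}_1,\dots,h^{(l-1)}_t)$ equal $F_{l-1}(o_{1:t};\theta)$. Layer $l$ computes queries, keys and values via \eqref{eq:qkv} as fixed linear maps of these states. It then forms the causal softmax mixture \eqref{eq:attn-inner}, applies residual addition and layer norm, and runs the position-wise FFN. The causal mask restricts position $j$ to indices $i\le j\le t$, and those states are themselves functions of $o_{1:j}$. Every operation here is a deterministic function whose parameters are fixed by $\theta$. Multi-head attention is just a concatenation of such maps with independent fixed projections. Hence $X^{(l)}=F_l(o_{1:t};\theta)$.

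After $L$ layers, the retrieval representation $R_t(\mathcal{H}_t)$ is $h_t^{(L)}$, or more generally any object read off from the attention weights and outputs. It therefore equals $G(o_{1:t};\theta)$ for a single deterministic map $G$. The hypothesis says that $\mathcal{H}_t$ enters the model only through $\{o_j\}_{j\le t}$, so $R_t(\mathcal{H}_t)=G(o_{1:t};\theta)$. Two histories with the same observation sequence give the same argument to $G$, so $R_t(\mathcal{H}_t)=R_t(\mathcal{H}'_t)$. This is exactly Definition~\ref{def:feedback-blind}. As a corollary, the task head then outputs identical action distributions, since it is also a fixed function of $h_t^{(L)}$.

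The only real obstacle is making precise what "depends only on the observation sequence" and $R_t$ mean. Two things must be settled. First, the forward pass must be deterministic at inference (no dropout or sampling). If dropout were present, we would either state equality in distribution or fix the randomness, conditioning on a common seed. Second, $R_t$ must be taken to be a function of the layer outputs. Once these conventions are fixed, the induction is routine. Note that training with reward losses affects only $\theta$, which is held fixed, so it plays no role in the argument.
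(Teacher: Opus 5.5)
Your proposal is correct and follows essentially the same route as the paper: an induction over layers starting from the embedding $x_j=\LN(W^E o_j+p_j)$, then using that attention, residual connections, layer norm, and the FFN are deterministic at fixed $\theta$, so $R_t=h_t^{(L)}$ coincides for observation-matched histories. Your version carries the whole state sequence $X^{(l)}$ as a function of $o_{1:t}$, which is slightly tidier than the paper's inductive step: that step assumes equality at all positions $j\le t$ but explicitly concludes it only at position $t$.
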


		Proposition~\ref{prop:obs_blind} does not deny that feedback can be used through recurrent hidden states, belief-state updates, or test-time parameter adaptation. It says that a frozen observation-only attention kernel cannot distinguish observation-equivalent but feedback-distinct histories in the same forward pass. A second limitation concerns input-level feedback injection.

\begin{proposition}[Input-level injection retains a rank-limited, entangled retrieval kernel]\label{prop:adt_limit}
Consider a concatenative feedback model with token embedding $x_j = W_e [o_j; f_j]$, where $f_j := [a_{j-1}; r_{j-1}]$ and $W_e = [W_e^{(o)}, W_e^{(f)}]$ is partitioned column-wise. Then the attention logit between query position $t$ and key position $j$ decomposes as
\begin{equation}
\ell_{tj} = \underbrace{o_t^\top M^{(oo)} o_j}_{\text{obs--obs}} + \underbrace{o_t^\top M^{(of)} f_j}_{\text{obs--fb}} + \underbrace{f_t^\top M^{(fo)} o_j}_{\text{fb--obs}} + \underbrace{f_t^\top M^{(ff)} f_j}_{\text{fb--fb}},
\label{eq:input-level-decomp}
\end{equation}
with $M^{(oo)} := (W_e^{(o)})^{\top} W_Q^{\top} W_K W_e^{(o)}$, $M^{(of)} := (W_e^{(o)})^{\top} W_Q^{\top} W_K W_e^{(f)}$, $M^{(fo)} := (W_e^{(f)})^{\top} W_Q^{\top} W_K W_e^{(o)}$, and $M^{(ff)} := (W_e^{(f)})^{\top} W_Q^{\top} W_K W_e^{(f)}$. Every interaction matrix factors through the shared product $W_Q^{\top} W_K \in \R^{d \times d}$ of rank at most $d_m$; in particular $\rank(M^{(\cdot\cdot)}) \le d_m$, $\col(M^{(o\cdot)}) \subseteq \col\big((W_e^{(o)})^{\top} W_Q^{\top}\big)$, and $\row(M^{(\cdot o)}) \subseteq \row\big(W_K W_e^{(o)}\big)$. Hence all four interaction modes are routed through the same pair of projection maps.
\end{proposition}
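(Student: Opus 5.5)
The plan is a direct algebraic computation followed by standard rank and range facts. I would proceed in three steps.

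\emph{Step 1: the logit decomposition.} Write the token embedding as a sum of an observation part and a feedback part,
\[
x_j = W_e^{(o)} o_j + W_e^{(f)} f_j ,
\]
using the column partition of $W_e$. Then $q_t = W_Q x_t$ and $k_j = W_K x_j$, so the unscaled logit is
\[
\ell_{tj} = x_t^\top W_Q^\top W_K x_j .
\]
The scaling $1/\sqrt{d_m}$ can be absorbed into the matrices, or dropped, since it does not affect rank or range. Substituting the split form of $x_t$ and $x_j$ and expanding bilinearly gives four terms. Transposing each embedding factor yields exactly $M^{(oo)},M^{(of)},M^{(fo)},M^{(ff)}$ as defined, which is \eqref{eq:input-level-decomp}. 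The statement has no layer normalization in this token map, so nothing breaks the linearity. If one insists on an LN as in \eqref{eq:embed}, I would note that the proposition is stated for the linear embedding and leave it there.

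\emph{Step 2: the rank bound.} Each interaction matrix has the form $A^\top (W_Q^\top W_K) B$, with $A,B\in\{W_e^{(o)},W_e^{(f)}\}$. The middle factor $W_Q^\top W_K$ is a $d\times d$ matrix that factors through $\R^{d_m}$, since $W_K:\R^d\to\R^{d_m}$. Hence
\[
\rank(W_Q^\top W_K)\le \min(\rank W_Q,\rank W_K)\le d_m .
\]
Submultiplicativity of rank, $\rank(XYZ)\le \rank Y$, then gives $\rank(M^{(\cdot\cdot)})\le d_m$.

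\emph{Step 3: the column- and row-space inclusions.} For $M^{(o\cdot)} = \big((W_e^{(o)})^\top W_Q^\top\big)\big(W_K W_e^{(\cdot)}\big)$, the column space of a product is contained in the column space of its left factor. This gives
\[
\col(M^{(o\cdot)})\subseteq \col\big((W_e^{(o)})^\top W_Q^\top\big).
\]
Symmetrically, for $M^{(\cdot o)} = \big((W_e^{(\cdot)})^\top W_Q^\top\big)\big(W_K W_e^{(o)}\big)$, the row space of a product is contained in the row space of its right factor, so
\[
\row(M^{(\cdot o)})\subseteq \row(W_K W_e^{(o)}).
\]
The concluding sentence, that all four modes are routed through the same pair of projection maps, is then an interpretation of these factorizations: every term shares the query map $W_Q$ and the key map $W_K$.

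I expect no substantive obstacle. The only care needed is bookkeeping the transposes correctly in Step 1. It is also worth noting that the matrix-size conventions ($W_Q^\top W_K\in\R^{d\times d}$ with $W_Q,W_K\in\R^{d_m\times d}$) are consistent with \eqref{eq:qkv}.
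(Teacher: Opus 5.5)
Your proposal is correct and follows essentially the same route as the paper's proof. Both write $x_j = W_e^{(o)} o_j + W_e^{(f)} f_j$, expand $x_t^\top W_Q^\top W_K x_j$ into four terms, write each interaction matrix as $A^\top (W_Q^\top W_K) B$ with $A,B \in \{W_e^{(o)}, W_e^{(f)}\}$, and read off the rank bound and the column/row-space inclusions from standard facts about matrix products; your submultiplicativity step and layer-normalization remark only make explicit what the paper leaves implicit.
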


Proposition~\ref{prop:adt_limit} is a per-layer statement about the bilinear structure of the attention kernel: input-level injection changes what is stored in the token features, but each layer's retrieval score remains a single rank-$d_m$ bilinear form in which the observation--observation, observation--feedback, and feedback--feedback modes cannot be assigned independent pathways. Depth can mix the channels indirectly through value updates, which explains why input-level models perform well on simpler tasks in our experiments; the bottleneck is that no layer can directly re-route retrieval based on feedback without entangling it with observation similarity. Conditioning the projections on feedback, as UAT does, provides this disentangled, direct pathway. The proof details of Proposition~\ref{prop:obs_blind} and Proposition~\ref{prop:adt_limit} are in Appendix~\ref{prof:prop2}

\section{Utility-Augmented Transformer}
\label{sec:uat}


\subsection{Utility Encoder}
\label{sec:utility-enc}

We first embed the most recent action--reward pair $(a_{j-1},r_{j-1})$ at observation token $j$ into $u_j$. The resulting $u_j$ serves as a compact feedback state that directly conditions downstream decisions:
\begin{equation}
  u_j = E\bigl(f_{\mathrm{a}}(a_{j-1}),\; r_{j-1}\bigr) \in \mathbb{R}^{d_u},
  \label{eq:utility-enc}
\end{equation}
where $f_{\mathrm{a}}(a_{j-1})=W_{\mathrm{a}}\,a_{j-1}$ is a bias-free linear function with learnable parameter $W_{\mathrm{a}}\in\mathbb{R}^{d_{\mathrm{a}}\times |\mathcal{A}|}$. $E$ is a lightweight neural network named \emph{utility encoder}, and $d_u$ denotes its output dimension. In practice, we adopt the instantiation: we set $E$ as a bias-free two-layer MLP~\citep{hornik1989multilayer} with ReLU~\citep{nair2010relu} activation:
\begin{equation}
  u_j = \mathrm{MLP}\bigl(\bigl[\,f_{\mathrm{a}}(a_{j-1});\; r_{j-1}\,\bigr]\bigr).
  \label{eq:ue-simple}
\end{equation}

The \emph{token-level gating vector} $g_j^{(l,h)}$ and \emph{regime-level gating vector} $\rho_t^{(h)}$ are then derived from $u_j$:
\begin{align}
  g_j^{(l,h)} &= g_u^{(l,h)}(u_j) \in [-1,1]^{d_m}, \qquad  \rho_t^{(h)} = f_u^{(h)}(\bar{u}_t) \in [-1,1]^{d_m},
  \label{gate} \\
  \bar{u}_t &= \lambda \bar{u}_{t-1} + (1-\lambda) u_t,
  \label{eq:discount-u}
\end{align}
with initialization $\bar{u}_0=\mathbf{0}$ and $(a_0,r_0)=(\mathbf{0},0)$. Here $l$ and $h$ index layer and head. The readouts $g_u^{(l,h)}$ and $f_u^{(h)}$ are bounded Lipschitz maps~\citep{rudin1976principles} from $\mathbb{R}^{d_u}$ to $[-1,1]^{d_m}$, and $\lambda\in(0,1)$ sets the exponential memory. The local gate $g_j^{(l,h)}$ is layer- and head-specific and modulates keys/values, while the regime gate $\rho_t^{(h)}$ is head-specific, shared across layers, and modulates queries through the smoothed utility states. In practice, the readouts are linear projections with tanh activation~\citep{lecun1998efficient}:
\begin{equation}
  g_u^{(l,h)}(u_j) = \tanh(W_s^{(l,h)}\, u_j), \qquad
  f_u^{(h)}(\bar{u}_t) = \tanh(W_\rho^{(h)}\, \bar{u}_t),
  \label{eq:gating-readout}
\end{equation}
where $W_s^{(l,h)}, W_\rho^{(h)} \in \mathbb{R}^{d_m \times d_u}$ are learnable embedding matrices initialized to zero. 

\subsection{Feedback Conditioned Attention}\label{sec:utility-affine}

For the current embedded observation $x_t$ and each historical embedded observation $x_j (j\le t)$, we replace the standard attention projections in equation~\eqref{eq:qkv} with feedback-conditioned attention:
\begin{align}
  \tilde{q}_{t}^{(l,h)}
  &= \Bigl(W_{Q,h}^{(l)} + \operatorname{diag}\!\bigl(\rho_{t}^{(h)}\bigr)\, W_{Q,h}^{(u,l)}\Bigr)\, x_{t}
     + U_{Q,h}^{(l)}\, \rho_{t}^{(h)},
  \label{eq:q-prime} \\
  \tilde{k}_{j}^{(l,h)}
  &= \Bigl(W_{K,h}^{(l)} + \operatorname{diag}\!\bigl(g_{j}^{(l,h)}\bigr)\, W_{K,h}^{(u,l)}\Bigr)\, x_{j}
     + U_{K,h}^{(l)}\, g_{j}^{(l,h)},
  \label{eq:k-prime} \\
  \tilde{v}_{j}^{(l,h)}
  &= \Bigl(W_{V,h}^{(l)} + \operatorname{diag}\!\bigl(g_{j}^{(l,h)}\bigr)\, W_{V,h}^{(u,l)}\Bigr)\, x_{j}
     + U_{V,h}^{(l)}\, g_{j}^{(l,h)}.
  \label{eq:v-prime}
\end{align}
The \emph{utility-affine matrices} $W_{Q}^{(u)}, W_{K}^{(u)}, W_{V}^{(u)}$ share the shape of the projections $W_{Q}, W_{K}, W_{V} \in \mathbb{R}^{d_m \times d}$, preserving the structure while introducing a feedback-conditioned modulation pathway. $\operatorname{diag}(\cdot)$ turns utility-gating vectors $\rho_t, g_j$ into a diagonal matrix suited to the dimensions of utility-affine matrices. The bias matrices $U_{Q}, U_{K}, U_{V} \in \mathbb{R}^{d_m \times d_m}$ inject a utility-dependent offset that remains active even when observations have small magnitude. All learnable matrices are head- and layer-specific. Then, the resulting single-head outputs are aggregated by multi-head attention and then processed by the usual residual and FFNs~\citep{vaswani2017attention}, yielding the layer representation $h_t^{(l)} \in \mathbb{R}^{d}$ passed to the next layer. The empirical contribution of each component in $\tilde{q}_t^{\top} \tilde{k}_j$ is examined in Appendix~\ref{app:logit-decomposition}.

After $L$ layers, the final-layer representation $h_t^{(L)} \in \mathbb{R}^{d}$ is mapped to an action distribution:
\begin{equation}
  \pi_\theta(\cdot \mid \mathcal{H}_t)
  = \mathrm{softmax}\!\bigl(W_{\mathrm{head}} h_t^{(L)} + b_{\mathrm{head}}\bigr).
  \label{eq:task-head}
\end{equation}
For discrete action spaces, $\pi_\theta$ defines a categorical policy over $\mathcal{A}$ and the action is selected greedily as $ a_t=\arg\max_{a\in\mathcal A}\pi_\theta(a\mid \mathcal{H}_t)$. For continuous allocation tasks (e.g., portfolio management), the softmax output is directly interpreted as a target weight vector over assets.

\textbf{Zero Degradation.} This design admits a zero-degradation property. When the utility gates vanish: $g_j^{(l,h)}=\rho_t^{(h)}=\mathbf{0}$, all utility-conditioned terms in Eqs.~\eqref{eq:q-prime}--\eqref{eq:v-prime} disappear, so UAT recovers Vanilla Transformer. Thus, uninformative feedback can be ignored without altering the observation-only backbone. This recovery point also anchors the theoretical comparison in section~\ref{sec:theory}.


\subsection{Training Strategy}
\label{sec:training}


\begin{algorithm}[t]
\caption{UAT training: rollout-then-optimize.}
\label{alg:train}
\begin{algorithmic}[1]
\Require batch size $B$ (episodes), window length $T$, model $\pi_\theta$, expert targets $a^\ast$
\State Sample $B$ episodes; set initial history $\mathcal{H}_{b,1}=\{o_{b,1}\}$ for $b=1,\dots,B$.
\smallskip
\State \textbf{Rollout (sequential, no parameter update).}\enspace For $t=1,\dots,T$:
\Statex\quad sample $\hat a_{b,t}\sim\pi_\theta(\cdot\mid\mathcal{H}_{b,t})$, advance the process, and store $(o_{b,t+1},\,r_{b,t},\,a^\ast_{b,t})$.
\smallskip
\State \textbf{Optimize (single batched forward pass over $(B,T)$).}
\Statex\quad compute utility states and gates $\{u_{b,t},g_{b,t},\rho_{b,t}\}_{t=1}^{T}$ using Eqs.~\eqref{eq:ue-simple}--\eqref{eq:discount-u}, with $\hat a_{b,0}=r_{b,0}=0$;
\Statex\quad logits $\ell_{b,t}\!=\!\mathrm{UAT}_\theta(o_{b,1:T},\,g_{b,1:T},\,\rho_{b,1:T})$ under a causal mask;
\Statex\quad update $\theta$ by Adam $\mathcal{L}\!=\!-\tfrac{1}{BT}\sum_{b,t}\log p_\theta(a^\ast_{b,t}\!\mid\!\ell_{b,t})$.
\smallskip
\State \textbf{Special case.}\enspace If actions and rewards are dataset-supplied, the rollout step is skipped.
\end{algorithmic}
\end{algorithm}

\textbf{Initialization.} We initialize the bias-free gating readouts to zero, $W_s^{(l,h)}, W_\rho^{(h)}=\mathbf{0}$, which yields $g_j^{(l,h)}=\rho_t^{(h)}=\mathbf{0}$ at initialization. The utility-affine matrices are initialized from $\mathcal{N}(0,0.1^2)$ to preserve gradient flow, while the utility-bias matrices are initialized to zero; all utility-conditioned terms are initially gated out. Hence UAT starts from the Vanilla Transformer and gradually activates the feedback modulation pathway during training.

\textbf{Expert Label Loss.} All models are trained against domain expert action labels, falling within a behavioral cloning paradigm~\citep{bain1999framework, pomerleau1989alvinn}. By fixing the training objective to supervised imitation, performance can be attributed solely to architecture capacity rather than variance in the optimization~\citep{ross2011dagger}. For discrete actions, all models minimize the cross-entropy loss~\citep{shannon1948mathematical} against the expert action $a_t^*$:
\begin{equation}
  \mathcal{L}_{\mathrm{CE}}
  = -\frac{1}{BT}\sum_{b=1}^{B}\sum_{t=1}^{T}
    \log \pi_\theta\!\bigl(a_{b,t}^* \mid \mathcal{H}_{b,t}\bigr),
  \label{eq:ce-loss}
\end{equation}
where $B$ is the batch size and $T$ is the episode length. For continuous allocation (Domain~C), the objective is replaced by a KL divergence~\citep{cover2006elements} against hindsight-optimal soft labels~\citep{hinton2015distilling}. Domain-specific expert construction details are provided in Appendix~\ref{app:expert-details}.

\textbf{Rollout-then-optimize.} UAT trains with rollout-then-optimize strategy~\citep{sutton2018reinforcement}: it first rolls out causally with no gradients to collect $\{(a_t, r_t)\}_{t=1}^{T}$, akin to on-policy RL~\citep{schulman2017ppo}; the shifted pairs $\{(a_{t-1}, r_{t-1})\}$ are then fed to the utility encoder, and a causally-masked parallel forward pass over $(B, T)$ tensor yields all logits, on which~\eqref{eq:ce-loss} is backpropagated each step. When actions and rewards are dataset-supplied (Domain~D), UAT skips the rollout process. At test, UAT performs causal step-by-step inference without any expert guidance. Algorithm~\ref{alg:train} summarizes the procedure.

\subsection{Parameter and Computational Overhead}
\label{sec:overhead}

UAT introduces only a small forward-pass overhead over its Transformer counterparts during both training and inference. Detailed analysis is provided in Appendix~\ref{app:eff}.

\section{Theoretical Analysis}
\label{sec:theory}



This section formalizes the theoretical properties of UAT. We first state assumptions explicitly, see details in Appendix~\ref{prof:assum}, and then establish the strict representational separation property, and uniform approximation result for feedback-dependent decision maps.


Let $\mathcal{F}_{\mathrm{TF}}^{\mathrm{obs}}$ denote the class of functions representable by observation-only Transformers. Let $\mathcal F_{\mathrm{UAT}}$ denote the class of functions representable by UAT with action--reward feedback.

\begin{theorem}[Strict representational hierarchy]
\label{thm:strict-hierarchy} For finite horizon $T\ge 2$, if the feedback space contains at least two distinguishable feedback outcomes, then
\begin{equation}
 \mathcal{F}_{\mathrm{TF}}^{\mathrm{obs}} \subsetneq \mathcal{F}_{\mathrm{UAT}}.
 \end{equation}
\end{theorem}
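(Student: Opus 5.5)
The proof has two parts: an inclusion and a separation.

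\emph{Inclusion.} I would use the zero-degradation property of Section~\ref{sec:uat}. Take any $f\in\mathcal{F}_{\mathrm{TF}}^{\mathrm{obs}}$, realized by an observation-only Transformer with parameters $\theta$. Build a UAT with the same embedding, projection matrices $W_{Q,h}^{(l)},W_{K,h}^{(l)},W_{V,h}^{(l)}$, FFNs, residuals and head. Set the gating readouts $W_s^{(l,h)}=W_\rho^{(h)}=\mathbf{0}$, so that $g_j^{(l,h)}=\rho_t^{(h)}=\mathbf{0}$ for every history. Then \eqref{eq:q-prime}--\eqref{eq:v-prime} collapse to \eqref{eq:qkv}, and the UAT computes exactly $f$ on every history. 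This gives $\mathcal{F}_{\mathrm{TF}}^{\mathrm{obs}}\subseteq\mathcal{F}_{\mathrm{UAT}}$.

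\emph{Strictness.} I would exhibit one function in $\mathcal{F}_{\mathrm{UAT}}$ that is not feedback-blind. Proposition~\ref{prop:obs_blind} says every element of $\mathcal{F}_{\mathrm{TF}}^{\mathrm{obs}}$ takes equal values on histories with the same observation sequence, so such a function cannot lie in $\mathcal{F}_{\mathrm{TF}}^{\mathrm{obs}}$.

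Since $T\ge 2$, take $t=2$ with a fixed observation pair $(o_1,o_2)$. Take two feedback values $f=(a_1,r_1)$ and $f'=(a_1',r_1')$ that are distinguishable, which I read as giving different inputs to the utility encoder. Define $\mathcal{H}_2=\{(o_1,a_1,r_1),o_2\}$ and $\mathcal{H}_2'=\{(o_1,a_1',r_1'),o_2\}$.

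The construction uses a single layer and head. Choose the encoder MLP and $W_\rho$ so that $\rho_2=\tanh(W_\rho \bar u_2)$ differs on the two histories. Since $\bar u_2=\lambda\bar u_1+(1-\lambda)u_2$ with $u_1$ fixed by $(a_0,r_0)=(\mathbf{0},0)$, it suffices that $u_2$ differs. Here is where I expect the main obstacle: the encoder is a bias-free ReLU MLP, so I must check that it can separate the two inputs. Inputs $z\neq z'$ in $\R^{d_a+1}$ can be separated by some linear functional $w$ with $w^\top z\neq w^\top z'$. I would pick $w$ so that at least one of these values is positive; this is possible after flipping the sign of $w$ unless both $z$ and $z'$ are in degenerate position, which I would handle by using both $w$ and $-w$ as hidden units. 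The ReLU outputs then differ, a generic second layer preserves the difference, and the choice $W_{\mathrm a}=I$ or a generic $W_{\mathrm a}$ keeps distinct feedback distinct.

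Next I make the output depend on $\rho_2$ through the query bias. Set $W_{Q,h}=W_{Q,h}^{(u)}=0$, $W_K=W_K^{(u)}=U_K=0$ and $U_V=0$, and choose $W_V$ arbitrary. With these choices the logits are all zero, and the output would not see $\rho$ at all. To fix this, I instead route $\rho_t$ into the values: set $U_{V}$ nonzero, which makes the value depend on the token gate $g_2=\tanh(W_s u_2)$, itself different across the two histories. Then $h_2$ contains the term $\tfrac12 U_V(g_1+g_2)$, which differs. A head $W_{\mathrm{head}}$ aligned with $U_V(g_2-g_2')$ gives different softmax outputs, and softmax is injective up to shifts on a two-action head with a suitable direction.

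Finally I must check that the residual, LayerNorm and FFN do not erase the difference. I would argue this by choosing the FFN to be zero and noting that the residual stream is $x_2+h_2$, where $x_2$ is identical across the two histories, so the difference survives. If a post-LayerNorm is present, I would take a generic direction so that the normalization stays injective on the two points. The resulting map distinguishes $\mathcal{H}_2$ from $\mathcal{H}_2'$, so it lies outside $\mathcal{F}_{\mathrm{TF}}^{\mathrm{obs}}$, and the inclusion is strict.
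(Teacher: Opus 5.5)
Your proof is correct, but the separation step takes a different route from the paper's. The inclusion step is the same as the paper's: zero readouts collapse Eqs.~\eqref{eq:q-prime}--\eqref{eq:v-prime} to \eqref{eq:qkv}.

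\textbf{How the routes differ.} For strictness, the paper starts from a feedback-informative pair $H_t,H'_t$. It invokes Assumption~\ref{ass:encoder} to get distinct local gates at position $j^*+1$. It then chooses $W_K^{(u,1)}$ and $W_Q^{(1)}$ so that the multiplicative key pathway gives a nonzero logit difference $d_k\,(w_{Q,k}^{\top}x_t)(w_{K,k}^{(u)\top}x_{j^*+1})$, and argues that the changed attention weight propagates to the head. You instead zero all logits, so attention is uniform, and let the additive value bias carry the feedback. At $t=2$, $g_1$ is identical across the two histories, so with $W_V^{(u)}=0$ the output difference is exactly $\tfrac12 U_V(g_2-g_2')$. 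You then carry this through a zero FFN, the residual and a suitably chosen head.

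\textbf{What your route buys.} It uses only the hypothesis actually stated in the theorem: two distinct feedback values, plus $|\mathcal{A}|\ge 2$ for the head, which the paper also needs. You do not need a feedback-informative task or Assumption~\ref{ass:encoder}, because you build the bias-free ReLU encoder by hand. It also avoids two loose steps in the paper's argument. First, a changed attention weight need not change $\sum_j \alpha_{tj}\tilde v_j$, for example if the values coincide or other logits move as well. Second, a difference created at layer~1 is not automatically preserved by later layers.

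\textbf{What the paper's route buys.} It shows feedback altering the retrieval weights themselves, which is the mechanism the paper advertises. Your separating function uses feedback only as an additive value offset. That suffices here, since the theorem compares only against observation-only Transformers.

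\textbf{Cleanup for your write-up.}
\begin{itemize}
\item Drop the $\rho_2$/query-bias false start. The gate that matters is $g_2=\tanh(W_s u_2)$, and since $\tanh$ is injective it suffices to pick $W_s$ with $W_s(u_2-u_2')\neq 0$.
\item The ``degenerate position'' caveat for the ReLU layer is unnecessary. Whenever $w^\top z\neq w^\top z'$, one of $\pm w$ already gives distinct ReLU outputs.
\item State $W_V^{(u)}=0$ explicitly, so that the $\operatorname{diag}(g_j)W_V^{(u)}x_j$ term cannot cancel $U_V(g_2-g_2')$.
\end{itemize}
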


Theorem~\ref{thm:strict-hierarchy} states a \emph{structural separation}: there exist feedback-dependent mappings that observation-only retrieval can never represent, regardless of optimization.



\begin{theorem}[Universal approximation for feedback-dependent decision maps]
\label{thm:approximation}
Under Assumptions~\ref{ass:compact}--\ref{ass:lipschitz} in Appendix~\ref{prof:assum}, for every $\varepsilon > 0$ and every Lipschitz decision map $F^*: \mathcal{H}_t \to \Delta(\mathcal{A})$, there exists a finite-depth, finite-width UAT such that
\begin{equation}
\sup_{H_t \in \mathcal{H}_t} \|F_{\UAT}(H_t) - F^*(H_t)\| < \varepsilon.
\label{eq:uniform-approx}
\end{equation}
\end{theorem}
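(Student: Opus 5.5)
The approach is to reduce Theorem~\ref{thm:approximation} to the classical universal approximation theorem for Transformers over compact input sets. The UAT's utility pathway is used to make each token carry its own feedback information, so that the whole history $H_t$ becomes a compact sequence of augmented tokens. I expect Assumptions~\ref{ass:compact}--\ref{ass:lipschitz} to provide three things: a finite horizon $T$; compactness of the observation, action and reward spaces (so the history space $\mathcal{H}_t$ is compact); and Lipschitz continuity of $F^*$ together with the readouts $g_u,f_u$. With these in place, the argument has three stages.

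\textbf{Step 1 (encoding feedback into tokens).} Choose the utility encoder $E$ and the gate readouts so that the map from the feedback pair to the gate, $(a_{j-1},r_{j-1})\mapsto g_j^{(1,h)}$, is injective and continuous on the compact feedback set. This is possible since $d_m$ can be taken large and $\tanh$ is injective. Because actions enter through the bias-free linear map $W_a$, distinct actions can be separated linearly. In the first layer, set $W_{V,h}^{(u,1)}=0$ and $U_{V,h}^{(1)}$ to a coordinate embedding. Then the value at token $j$ equals $W_{V,h}x_j + U_{V,h}g_j$. Choosing attention that is near-diagonal is awkward under softmax, so instead route $g_j$ through the residual stream as follows: use the key bias $U_{K,h}^{(1)}g_j$ together with a positional query to make position $j$ attend to itself. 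Large scaling makes this approximate self-attention arbitrarily sharp. The result is that after layer one, each token state approximately equals $z_j=(x_j,\phi(a_{j-1},r_{j-1}))$ for a continuous injective $\phi$. The error is uniform over the compact domain.

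\textbf{Step 2 (reduction).} Since $\mathcal{H}_t\ni H_t\mapsto (z_1,\dots,z_t)$ is a continuous injection on a compact set, it is a homeomorphism onto its image $K$. Hence $F^*\circ$(inverse) extends, by Tietze/McShane using Lipschitzness, to a continuous map $\tilde F$ on a compact neighbourhood of $K$. With the gates fixed after layer one, the remaining layers of UAT include, as the zero-gate special case, standard causal Transformers acting on the token sequence $(z_j)$. This uses the zero-degradation property, by setting all later utility-affine and bias matrices to zero. Invoking the universal approximation theorem for (causal) Transformers on compact sets of sequences, with positional encodings (Yun et al.), yields a network whose last-position output approximates $\tilde F$ to within $\varepsilon/3$. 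This is then composed with the softmax head. To land in $\Delta(\mathcal{A})$, approximate $\log$ of a version of $F^*$ shifted slightly into the simplex interior, which costs another $\varepsilon/3$. The softmax is $1$-Lipschitz, so the error transfers directly.

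\textbf{Step 3 (error propagation).} Every block is Lipschitz on bounded sets, and LayerNorm is Lipschitz away from zero variance, which positional encodings can guarantee. The layer-one approximation error from Step 1 therefore propagates with a finite Lipschitz constant, and we choose it small enough to contribute $\le\varepsilon/3$. Summing the three errors gives \eqref{eq:uniform-approx}.

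The main obstacle is Step 1: making the softmax attention in a UAT layer faithfully copy the per-token feedback $g_j$ into token $j$'s own residual state, uniformly over histories, despite softmax averaging. The first attempt would be the sharp positional self-attention described above, with a temperature $\to\infty$ limit and uniform-continuity bounds. If that proves delicate, the fallback is to argue that the observation-only Transformer approximation theorem already allows the per-token input $z_j$ to be formed by the embedding, and to realise this through the value bias $U_V g_j$ with uniform attention. In that case positional information distinguishes tokens, and $g_j$ is recovered afterwards by the subsequent token-wise FFNs.
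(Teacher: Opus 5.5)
Your overall strategy is sound, and it differs from the paper's. The paper never writes the gate into the token stream. It observes that for a frozen gate configuration the projections in \eqref{eq:q-prime}--\eqref{eq:v-prime} have constant matrices and biases, and asserts that the UAT is then a standard Transformer on augmented tokens $[x_j;g_j]$. It then applies Assumption~\ref{ass:transformer-universal}. To pass from frozen to feedback-dependent gates it uses MLP universal approximation for the encoder, a finite net over $[-1,1]^{d_m\times t}\times[-1,1]^{d_m}$, a global Lipschitz constant $L_{\UAT}$, and an $\varepsilon/3$ triangle inequality.

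You proceed differently:
\begin{itemize}
\item You require only that the feedback-to-gate map be a continuous injection.
\item You build an explicit first-layer circuit that writes gate information into the residual stream through the value bias $U_{V,h}^{(1)}g_j$. Some such circuit is needed, since in a UAT the gate enters only through the $Q/K/V$ projections and reaches the residual stream only via attention-weighted values.
\item You use compactness to rewrite $F^*$ as a continuous function of the augmented-token sequence.
\item You run layers $2,\dots,L$ at zero gates so that Assumption~\ref{ass:transformer-universal} applies there. Cite that assumption rather than Yun et al., whose result is stated for non-causal encoders.
\item You handle the simplex-valued output through $\log$ of an interior-shifted target.
\end{itemize}

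What this buys is that two facts the paper's argument takes for granted are actually established: that a UAT realizes a Transformer on augmented tokens, and that $F^*$ factors continuously through those tokens. The covering step also disappears. Injectivity is exactly realizable by the bias-free encoder: $y=\mathrm{ReLU}(y)-\mathrm{ReLU}(-y)$ gives the identity, and $\tanh(W_s\,\cdot)$ is injective for injective $W_s$, together with an injective $W_a$. By contrast, Assumption~\ref{ass:encoder} only separates finite sets, so it would not suffice for your homeomorphism step. The paper's route is shorter and more modular, treating the gates abstractly as extra inputs.

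Step 1 needs repair. As written, the self-copy circuit does not work. The key bias $U_{K,h}^{(1)}g_j$ depends on feedback, not position, so it cannot make token $j$ single itself out under the causal mask. Selectivity would have to come from $W_{K,h}^{(1)}x_j$, whose positional content the embedding LayerNorm rescales in an observation-dependent way.

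Your fallback is the right construction, but its justification is wrong: a token-wise FFN at position $j$ cannot extract $g_j$ from an average. It does not need to. Set $W_{K,h}^{(1)}=W_{K,h}^{(u,1)}=U_{K,h}^{(1)}=0$, so every first-layer logit vanishes and causal attention is exactly uniform. Set $W_{V,h}^{(1)}=W_{V,h}^{(u,1)}=0$ and $U_{V,h}^{(1)}=I$. Token $j$ then receives exactly $\bar g_j=\frac{1}{j}\sum_{i\le j}g_i$. Because $g_j=j\bar g_j-(j-1)\bar g_{j-1}$, the map $H_t\mapsto(x_j,\bar g_j)_{j\le t}$ is still a continuous injection of a compact set. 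Your Step 2 therefore goes through verbatim with no layer-one error, and the temperature limit and most of Step 3 become unnecessary.

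Two smaller points. Use Tietze rather than McShane, since you have no Lipschitz bound on the inverse of the encoding. Also ensure $o_j\mapsto x_j$ stays injective through LayerNorm, for example by choosing $W^E$ injective with range orthogonal to $\mathbf{1}$ and to the $p_j$.
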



Theorem~\ref{thm:approximation} is phrased as \emph{universality inheritance} result. This is more rigorous than claiming unconditional universality directly from the UAT equations. It isolates the contribution of the feedback pathway while transparently stating all assumptions borrowed from the underlying Transformer approximation theory. The full proof of Theorem~\ref{thm:strict-hierarchy} and Theorem~\ref{thm:approximation} is given in Appendix~\ref{app:proof-theo}.

\section{Experiments}
\label{sec:experiments}

Our experimental evaluation is organized into five strands: (i) overall performance; (ii) boundary analyses testing robustness to feedback degradation; (iii) ablations isolating each architectural component;  (iv) mechanism analyses visualizing how attention is reallocated; and (v) efficiency and sensitivity studies. This structure allows us to connect empirical findings to theoretical predictions. All experiments use an RTX A6000 GPU (48GB). 

\subsection{Experimental Setup}

\label{sec:exp-design}


\textbf{Datasets.}
\label{sec:exp-bench}
Consider four benchmarks covering representative settings: \textbf{(A) Synthetic POMDP (Dark Room)} with episode-wise hidden goals under three across-episode schedules~\citep{zintgraf2020varibad,laskin2023ad}, \textbf{(B) non-stationary clinical treatment}~\citep{johnson2016mimic} with regime-switching treatment efficacy, \textbf{(C) financial trading} with market regime transitions~\citep{yahoo2025finance}, and \textbf{(D) movie recommendation}~\citep{harper2015movielens} with delayed genre-preference signals. Detailed settings including dataset statistics, features, hyperparameters and evaluation metrics are provided in Appendix~\ref{appendix:exp}.

\textbf{Baselines.}
\label{sec:baselines}
We compare UAT against the following baselines. \textbf{(1) Cross-domain baselines:} \emph{observation-only:} VT~\citep{vaswani2017attention} and GTrXL~\citep{parisotto2020stabilizing}; \emph{test-time adaptation:} TENT~\citep{wang2021tent} and CoTTA~\citep{wang2022cotta}; \emph{input-level feedback:} DT-style~\citep{chen2021decision} and A-DT, which concatenates shifted $(a_{t-1},r_{t-1})$ into each observation (defined in Proposition~\ref{prop:adt_limit}). 

\textbf{(2) Domain-specific baselines:} IQL~\citep{kostrikov2022offline}, CaDM~\citep{lee2020cadm} for Domain~B; DeepTrader~\citep{wang2021deeptrader}, SARL~\citep{ye2020} for Domain~C; SASRec~\citep{kang2018sasrec}, BERT4Rec~\citep{sun2019bert4rec} for Domain~D. We also design six ablation variants to test the component-specific contribution detailed in Appendix~\ref{app:ablation}.



\subsection{Overall Performance}
\label{res:overall}

\begin{table*}[t]
\centering\footnotesize
\renewcommand{\arraystretch}{1.05}
\setlength{\tabcolsep}{2.0pt}
\newcommand{\sd}[1]{{\scriptsize\textcolor{gray!70}{$\pm$#1}}}
\newcommand{\val}[2]{#1\,\sd{#2}}
\newcommand{\valb}[2]{\textbf{#1}\,\sd{#2}}
\newcommand{\valu}[2]{\underline{#1}\,\sd{#2}}
\newcommand{\grp}[2]{\multirow{#1}{*}{\parbox{0.68in}{\centering\scriptsize #2}}}
\newcommand{\familyrule}{\addlinespace[1pt]\cmidrule(lr){1-12}\addlinespace[1pt]}
\caption{Main results, 3 seeds, mean$\pm$std. \textbf{A}: acc (Gradual/Abrupt/Cyclical). \textbf{B}: overall / post-shift acc. \textbf{C}: annualized Sharpe (DJIA/HSI/CSI). \textbf{D}: HR@10, NDCG@10. \textbf{Bold}: best; \underline{underline}: second.}
\label{tab:main}
\resizebox{\textwidth}{!}{%
\begin{tabular}{@{}ll ccc cc ccc cc@{}}
\toprule
\textbf{Family} & \textbf{Model}
& \multicolumn{3}{c}{\textbf{A} (Acc)}
& \multicolumn{2}{c}{\textbf{B}}
& \multicolumn{3}{c}{\textbf{C} (SR)}
& \multicolumn{2}{c}{\textbf{D}} \\
\cmidrule(lr){3-5}\cmidrule(lr){6-7}\cmidrule(lr){8-10}\cmidrule(l){11-12}
& & Grad.\,$\uparrow$ & Abr.\,$\uparrow$ & Cyc.\,$\uparrow$
& Acc\,$\uparrow$ & Post\,$\uparrow$
& DJIA\,$\uparrow$ & HSI\,$\uparrow$ & CSI\,$\uparrow$
& HR\,$\uparrow$ & NDCG\,$\uparrow$ \\
\midrule

\grp{2}{Obs-only}
& Vanilla-TF
& \val{0.362}{0.01} & \val{0.356}{0.02} & \val{0.340}{0.01}
& \val{0.700}{0.00} & \val{0.408}{0.02}
& \val{1.036}{0.011} & \val{0.243}{0.026} & \val{0.383}{0.018}
& \val{0.296}{0.005} & \val{0.135}{0.002} \\

& GTrXL
& \val{0.328}{0.00} & \val{0.336}{0.01} & \val{0.366}{0.01}
& \val{0.636}{0.01} & \val{0.413}{0.02}
& \val{1.030}{0.050} & \val{0.290}{0.142} & \valu{0.463}{0.060}
& \val{0.311}{0.004} & \val{0.142}{0.001} \\
\familyrule

\grp{2}{TTA}
& TENT
& \val{0.346}{0.03} & \val{0.354}{0.01} & \val{0.346}{0.01}
& \val{0.675}{0.05} & \valu{0.493}{0.05}
& \val{1.024}{0.096} & \val{0.524}{0.120} & \val{0.350}{0.021}
& \val{0.244}{0.002} & \val{0.113}{0.000} \\

& CoTTA
& \val{0.346}{0.03} & \val{0.354}{0.01} & \val{0.346}{0.01}
& \val{0.675}{0.05} & \valu{0.493}{0.05}
& \val{1.029}{0.031} & \valu{0.528}{0.052} & \val{0.392}{0.047}
& \val{0.244}{0.002} & \val{0.113}{0.000} \\
\familyrule

\grp{2}{Input-fb}
& DT-style
& \val{0.658}{0.21} & \val{0.738}{0.00} & \val{0.791}{0.01}
& \val{0.706}{0.04} & \val{0.442}{0.02}
& \valu{1.052}{0.063} & \val{0.494}{0.128} & \val{0.382}{0.035}
& \valu{0.339}{0.005} & \valb{0.155}{0.003} \\

& A-DT
& \valu{0.866}{0.00} & \valu{0.826}{0.00} & \valu{0.838}{0.00}
& \val{0.667}{0.01} & \val{0.414}{0.03}
& \val{1.049}{0.040} & \val{0.283}{0.054} & \val{0.372}{0.055}
& \val{0.337}{0.003} & \valu{0.154}{0.002} \\
\familyrule

\grp{6}{Domain-spec.}
& IQL$^{\text{B}}$
& \textendash & \textendash & \textendash
& \valu{0.712}{0.02} & \val{0.425}{0.02}
& \textendash & \textendash & \textendash
& \textendash & \textendash \\

& CaDM$^{\text{B}}$
& \textendash & \textendash & \textendash
& \val{0.375}{0.06} & \val{0.253}{0.03}
& \textendash & \textendash & \textendash
& \textendash & \textendash \\

& DeepTrader$^{\text{C}}$
& \textendash & \textendash & \textendash
& \textendash & \textendash
& \val{0.983}{0.012} & \val{-0.019}{0.005} & \val{0.461}{0.010}
& \textendash & \textendash \\

& SARL$^{\text{C}}$
& \textendash & \textendash & \textendash
& \textendash & \textendash
& \val{1.017}{0.007} & \val{0.043}{0.006} & \val{0.460}{0.005}
& \textendash & \textendash \\

& SASRec$^{\text{D}}$
& \textendash & \textendash & \textendash
& \textendash & \textendash
& \textendash & \textendash & \textendash
& \val{0.281}{0.010} & \val{0.129}{0.003} \\

& BERT4Rec$^{\text{D}}$
& \textendash & \textendash & \textendash
& \textendash & \textendash
& \textendash & \textendash & \textendash
& \val{0.160}{0.004} & \val{0.073}{0.002} \\
\midrule

\grp{1}{Ours}
& \textbf{UAT}
& \valb{0.873}{0.00} & \valb{0.827}{0.00} & \valb{0.842}{0.00}
& \valb{0.808}{0.02} & \valb{0.599}{0.01}
& \valb{1.071}{0.038} & \valb{0.554}{0.123} & \valb{0.473}{0.013}
& \valb{0.341}{0.002} & \valb{0.155}{0.001} \\
\bottomrule
\end{tabular}%
}
\end{table*}

\textbf{Strong overall performance of UAT.} Table~\ref{tab:main} shows that UAT achieves the best or statistically comparable results across all benchmarks. Auxiliary metrics are reported in Appendix~\ref{app:diag}, where UAT remains consistently strong. UAT's gains over observation-only baselines indicate that feedback is informative, while its gains over input-level baselines show the gain of conditioning retrieval over just concatenating. The comparison with IQL in Domain B suggests that UAT outperforms reward usage through gradient alone. The weaker results of domain-specific baselines mainly reflect stronger non-stationary settings, which emphasize cross-regime stability rather than overall model quality.

\textbf{The advantage of feedback-conditioned retrieval grows as regime becomes more complex.} In simpler settings like darkroom and recommendation, UAT–A-DT gap is smaller (average accuracy 0.847 (UAT) vs.\ 0.843 (A-DT) in Domain A). In noisier and faster regimes like clinical decision and financial trading, the gap becomes larger (Domain B: 0.599 (UAT) vs.\ 0.442 (DT) / 0.414 (A-DT); Domain C CSI Sharpe: 0.473 (UAT) vs.\ 0.382 (DT) / 0.372 (A-DT)). This suggests that input-level methods have limitations in harder scenarios, where feedback-conditioned retrieval is more effective.

\subsection{Feedback-Quality Degradation Boundary Analysis}
\label{res:sense}

\begin{figure}[t]
\centering
\begin{minipage}[c]{0.7\linewidth}
  \centering
  \includegraphics[width=\linewidth]{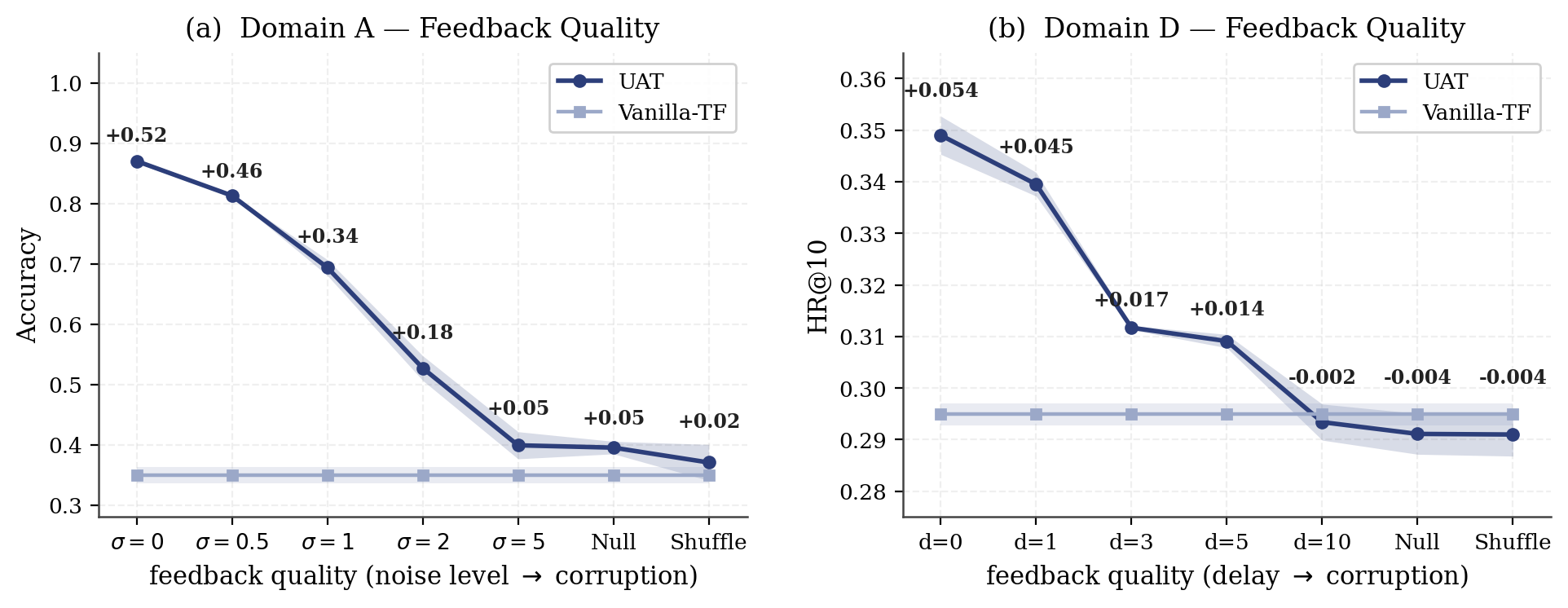}
\end{minipage}\hfill
\begin{minipage}[c]{0.2\linewidth}
  \centering
  \scriptsize
  \setlength{\tabcolsep}{2.5pt}
  \renewcommand{\arraystretch}{1.15}
  \begin{tabular}{@{}lc@{}}
  \toprule
  \multicolumn{2}{c}{\textbf{Inverted}}\\
  \multicolumn{2}{c}{\textbf{reward}}\\
  \midrule
  \textbf{UAT}  & \textbf{VT} \\
  \midrule
  \multicolumn{2}{c}{\emph{Dom.~A (Acc)}}\\
  \textbf{0.870} & 0.349 \\
  \midrule
  \multicolumn{2}{c}{\emph{Dom.~D (HR@10)}}\\
  \textbf{0.341} & 0.296 \\
  \bottomrule
  \end{tabular}
\end{minipage}
\caption{Boundary analysis. Left: Domain~A Gradual under reward noise/corruption; Domain~D under delay/corruption. Bands $=$ std, $\pm$ marks the UAT$-$VT gap. Right: inverted reward ($r{\mapsto}{-}r$).}
\label{fig:boundary}
\end{figure}

We use the Domain~A and Domain~D benchmarks to examine how UAT behaves as the feedback signal becomes progressively less informative. Detailed settings are summarized in Appendix~\ref{app:exp-extra}. 

\textbf{UAT degrades toward the Vanilla Transformer when feedback weakens.}
Figure~\ref{fig:boundary} sweeps feedback signal from clean to corrupted. With informative feedback, UAT leads VT by a large margin (Domain~A: $+0.52$ clean; Domain~D: $+0.054$ at $d{=}0$). The gap shrinks as feedback degrades: on Domain~A, $+0.18$ at $\sigma{=}2$, then $+0.05$ (null) and $+0.02$ (shuffled); on Domain~D, $+0.054$ to $+0.014$ by $d{=}5$, and near VT under null/shuffled. Thus, UAT tracks feedback quality and collapses to VT as signal weakens; the slight Domain~D undershoot is consistent with extra capacity and noise.

\textbf{UAT relies on feedback structure rather than reward polarity.} The table on the right of figure~\ref{fig:boundary} reports inverted-reward setting. Inverting every reward leaves UAT almost unchanged. This indicates that UAT extracts regime from changes and patterns in feedback rather than from absolute value.

\subsection{Ablation Study}
\label{res:ablation}
Full ablation setting and results are reported in Appendix~\ref{app:ablation}. Removing any component weakens UAT, suggesting that the gains come from the integrated feedback-conditioned retrieval design.

\subsection{Mechanism Analysis}
\label{res:attn}

\begin{figure}[t]
    \centering
    \includegraphics[width=0.7\linewidth]{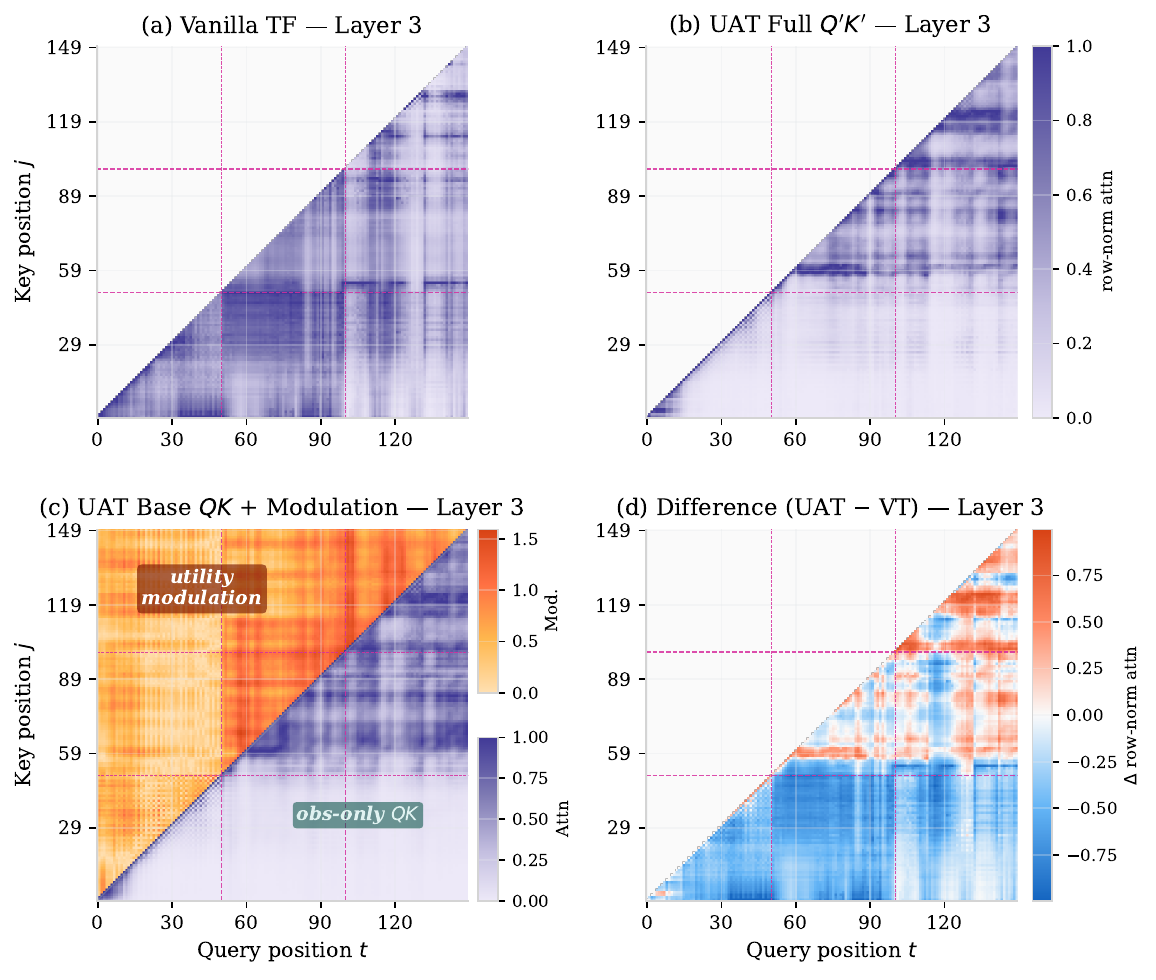}
    \caption{Cyclic-shift attention on DarkRoom
    ($T{=}150$, two shifts; 32-seed mean).
    (a)~VT recalls pre-shift prototypes.
    (b)~UAT refocuses on the active regime.
    (c)~Base attention (lower) with feedback modulation (upper),
    strongest post-shift.
    (d)~UAT--VT logit change.}
    \label{fig:attn_heatmaps}
\end{figure}

\begin{figure}[t]
    \centering
    \includegraphics[width=0.7\linewidth]{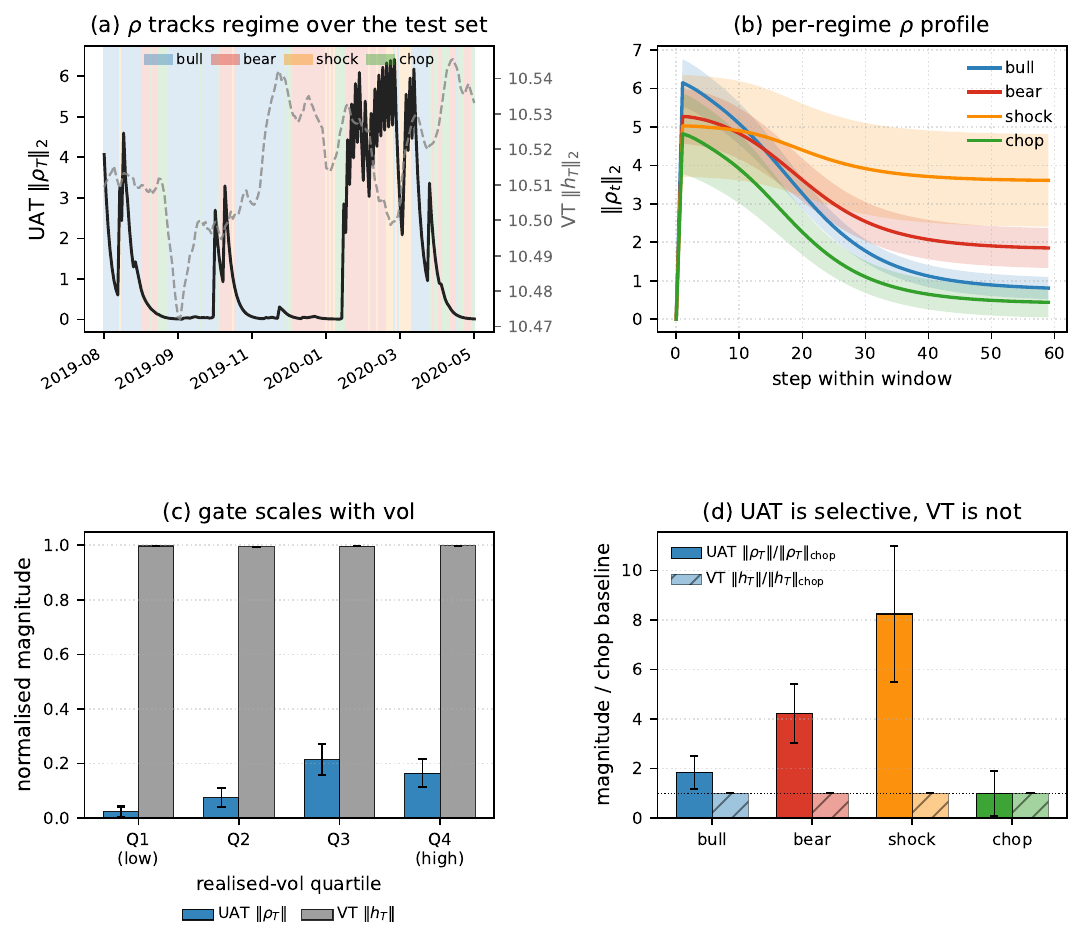}
    \caption{Regime gate on HSI-45.
    (a)~$\|\rho_T\|$ tracks regime changes, while VT
    $\|h_T\|$ remains flat.
    (b)~Within-window $\|\rho_t\|$ separates regimes.
    (c)~Gate magnitude rises with volatility quartile.
    (d)~Shock regimes show the largest increase over the calm baseline.}
    \label{fig:mode_cluster_domC}
\end{figure}

\begin{figure}[t]
    \centering
    \includegraphics[width=0.7\linewidth]{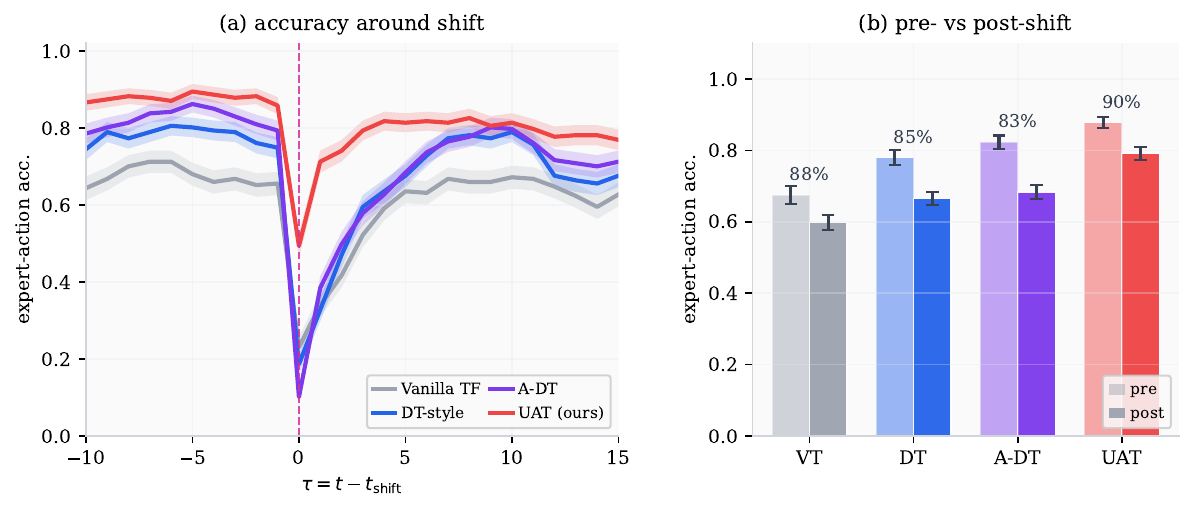}
    \caption{Adaptation kinetics on Domain~B Sepsis
    ($T{=}90$, 64 episodes, 265 shifts).
    (a)~Per-step accuracy around regime shifts ($\tau{=}0$).
    (b)~Mean accuracy before ($[{-}10,{-}1]$) and after
    ($[{+}1,{+}15]$) each shift. Shaded bands denote SEM.}
    \label{fig:kinetics_domB}
\end{figure}

We further analyze how the utility modulation term works. The detailed settings are in Appendix~\ref{app:exp-extra}.

\textbf{UAT relocates attention through modulation.} Figure~\ref{fig:attn_heatmaps} visualizes attention under controlled shifts on Domain~A (150 steps of synthesis). Panel~(a) shows VT's attention concentrates in pre-shift areas, whereas UAT redirects attention to post-shift in panel~(b). Panel~(d) isolates this difference clearly. Panel~(c) further shows that the modulation pathway concentrates after each shift.

\textbf{Gating vectors read regime changes and accelerate adaptation.}
Figure~\ref{fig:mode_cluster_domC} probes UAT's modulation pathway on an HSI-45-only diagnostic during shock periods. Market regimes are assigned post hoc using 21-day forward returns and $z$-scores. The window-end gate $\|\rho_T\|$ rises sharply around 2020 COVID drawdown, while within-window $\|\rho_t\|$ separates regimes in expected stress order: shock $>$ bear $>$ bull $>$ chop. It also increases with realized-volatility quartiles and is largest under shocks relative to calm baseline, indicating that the feedback pathway learns interpretable signals. Figure~\ref{fig:kinetics_domB} further links this mechanism to policy recovery on Sepsis: after a regime shift, UAT recovers to pre-shift accuracy within $\tau{\le}5$, whereas DT-style and A-DT remain $10$--$15$\,pp below their baselines and continue to fluctuate. In total, UAT retains the highest pre-shift accuracy ($90\%$).

\subsection{Efficiency and Sensitivity Study}

\begin{figure}[t]
    \centering
    \includegraphics[width=0.7\linewidth]{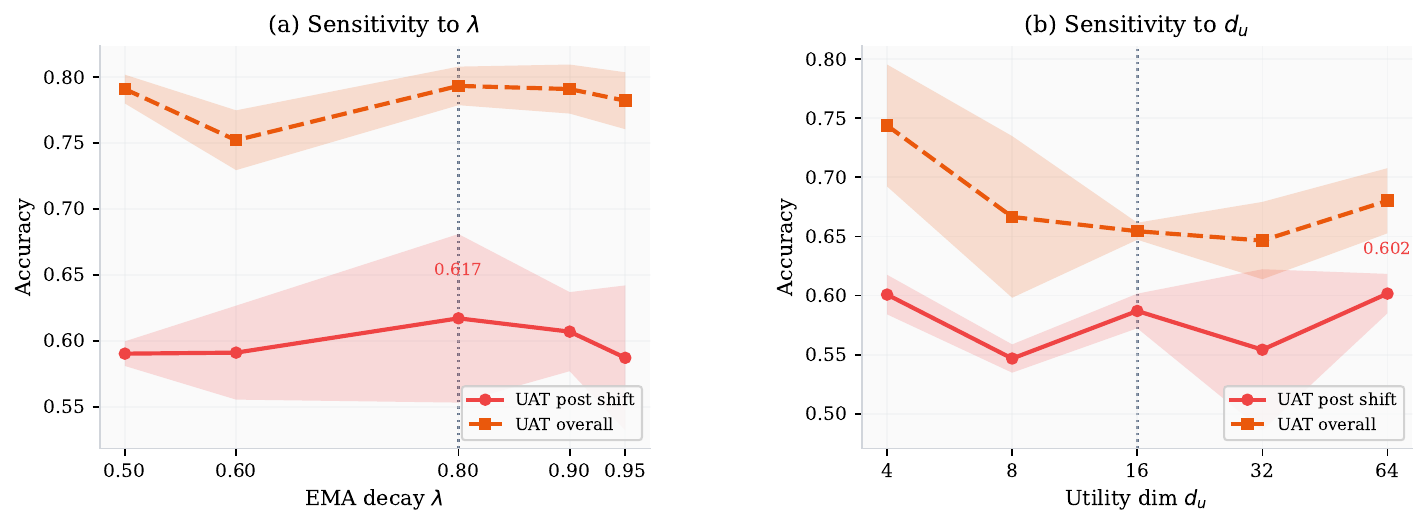}
    \caption{Sensitivity analysis on Domain~B using one-at-a-time
    sweeps over three seeds. Shaded bands denote $\pm$ one standard
    deviation. Solid curves report post-shift expert-action accuracy,
    while dashed curves report overall test accuracy. Vertical lines
    indicate the default settings ($\lambda{=}0.8$, $d_u{=}16$).}
    \label{fig:sensitivity_domB}
\end{figure}

\textbf{Efficiency.} Overall, UAT achieves these gains with moderate computational overhead; details are provided in Appendix~\ref{app:eff}.

\textbf{Sensitivity.} 
UAT-specific and cross-baseline training hyperparameters are summarized in table~\ref{tab:hyperparams} in Appendix~\ref{app:exp-hyp}. Figure~\ref{fig:sensitivity_domB} sweeps the decay $\lambda$ and utility width $d_u$ (one-at-a-time, all else fixed, three seeds). Both are theoretically sensible---$\lambda$ governs utility memory and $d_u$ limits feedback capacity---yet experimental results show that neither drives large swings. For $\lambda$, overall accuracy has cross-seed std mostly $0.01$--$0.02$ (max $0.023$); post-shift std is usually $0.01$--$0.04$. For $d_u$, std on overall accuracy is tight: $\pm 0.007$ at the default $d_u{=}16$; the only outlier is $d_u{=}8$ (low mean, $\pm 0.068$). Defaults $(\lambda{=}0.8,\,d_u{=}16)$ therefore retain strong post-shift performance without brittle tuning.

\section{Conclusion}
\label{sec:conclusion}

We studied sequential decision making under non-stationarity by identifying feedback-blind retrieval as a structural bottleneck of observation-driven Transformer policies. 
After formalizing feedback-informative decision processes, we proposed the Utility-Augmented Transformer (UAT), which conditions query, key, and value projections on a compact action--reward utility state so that feedback can reshape attention retrieval during the forward pass. UAT preserves a zero-degradation path to vanilla attention when feedback is uninformative. Theoretically, we showed that UAT strictly enlarges the observation-only Transformer class and can approximate feedback-dependent decision maps under standard assumptions. Empirically, UAT improves across four non-stationary benchmarks, degrades gracefully under weakened feedback, passes component ablations, exhibits interpretable modulation patterns, and maintains moderate computational overhead.

\clearpage
\bibliographystyle{plainnat}

\bibliography{reference}

\clearpage
\appendix
\section*{Appendix}

\section{Proof Notes}
\label{app:proofs}

\subsection{Proof of Feedback-Blind Propositions} \label{prof:prop}

\begin{proof}[Proof of Proposition~\ref{prop:obs_blind}]\label{prof:prop1}

We proceed by induction over the layers. Let $h_j^{(\ell)}$ denote the representation at position $j$ after layer $\ell$ (with $h_j^{(0)} = x_j$ the input embedding).

\textbf{Base case ($\ell = 0$).} The input embedding is $x_j = \LN(W^E o_j + p_j)$, which depends only on $o_j$ and the positional encoding $p_j$. Since $\mathcal{H}_t$ and $\mathcal{H}'_t$ share the same observation sequence, $x_j(\mathcal{H}_t) = x_j(\mathcal{H}'_t)$ for all $j \leq t$.

\textbf{Inductive step.} Assume $h_j^{(\ell)}(\mathcal{H}_t) = h_j^{(\ell)}(\mathcal{H}'_t)$ for all $j \leq t$. The attention output at layer $\ell + 1$ is
\[
\bar{h}_t^{(\ell+1)} = \sum_{j=1}^{t} \softmax_j\left(\frac{\langle W_{Q}^{(\ell)} h_t^{(\ell)},\, W_{K}^{(\ell)} h_j^{(\ell)} \rangle}{\sqrt{d_m}}\right) W_{V}^{(\ell)} h_j^{(\ell)}.
\]
By the inductive hypothesis, all query, key, and value terms are identical for both histories. The softmax and summation are deterministic functions of these terms, so $\bar{h}_t^{(\ell+1)}(\mathcal{H}_t) = \bar{h}_t^{(\ell+1)}(\mathcal{H}'_t)$.

The feed-forward network $FFN^{(\ell)}$ and layer normalization $LN$ are deterministic, so $h_t^{(\ell+1)}(\mathcal{H}_t) = h_t^{(\ell+1)}(\mathcal{H}'_t)$.

By induction, after $L$ layers, $h_t^{(L)}(\mathcal{H}_t) = h_t^{(L)}(\mathcal{H}'_t)$. The task head $\pi_\theta(\cdot \mid \mathcal{H}_t) = \softmax(W_{\text{head}} h_t^{(L)} + b_{\text{head}})$ is deterministic, so the final retrieval representation $R_t(\mathcal{H}_t)=h_t^{(L)}(\mathcal{H}_t) = h_t^{(L)}(\mathcal{H}'_t) = R_t(\mathcal{H}'_t)$.
\end{proof}

\begin{proof}[Proof of Proposition~\ref{prop:adt_limit}]\label{prof:prop2}
Write $x_j = W_e^{(o)} o_j + W_e^{(f)} f_j$ from $x_j = W_e [o_j; f_j]$ with the column partition $W_e = [W_e^{(o)}, W_e^{(f)}]$. The attention logit (dropping the irrelevant $1/\sqrt{d_m}$ scaling) is
\begin{align*}
\ell_{t,j} &= (W_Q x_t)^{\top} (W_K x_j)
= x_t^{\top} W_Q^{\top} W_K x_j \\
&= \big(o_t^{\top} (W_e^{(o)})^{\top} + f_t^{\top} (W_e^{(f)})^{\top}\big)\, W_Q^{\top} W_K\, \big(W_e^{(o)} o_j + W_e^{(f)} f_j\big),
\end{align*}
which expands into the four terms of \eqref{eq:input-level-decomp} with the stated matrices. Each of the four interaction matrices is of the form $A^{\top} W_Q^{\top} W_K B$ with $A, B \in \{W_e^{(o)}, W_e^{(f)}\}$. Since $W_Q^{\top} W_K \in \R^{d \times d}$ has rank at most $d_m$ (it is the product of $W_Q^{\top} \in \R^{d \times d_m}$ and $W_K \in \R^{d_m \times d}$), each interaction matrix has rank at most $d_m$, and
\[
\col\big(A^{\top} W_Q^{\top} W_K B\big) \subseteq \col\big(A^{\top} W_Q^{\top}\big), \qquad
\row\big(A^{\top} W_Q^{\top} W_K B\big) \subseteq \row\big(W_K B\big),
\]
which gives the stated inclusions with $A, B$ instantiated per term. Hence all four interaction modes factor through the single pair of maps $x \mapsto W_Q x$ and $x \mapsto W_K x$.
\end{proof}

\subsection{Assumptions for Theoretical Properties}\label{prof:assum}

\begin{assumption}[Finite horizon and compact domains]
\label{ass:compact}
The horizon $T$ is finite. The observation, action, and reward spaces satisfy $\mathcal{O} \subset \mathbb{R}^{d_o}$, $\mathcal{A} \subset \mathbb{R}^{d_a}$, $\mathcal{R} \subset \mathbb{R}$, and each set is compact.
\end{assumption}

\begin{assumption}[Rich utility encoding]
\label{ass:encoder}
The utility encoder $E: \mathbb{R}^{d_a} \times \mathbb{R} \to \mathbb{R}^{d_u}$ and local gate readouts $\{g_u^{(l,h)}\}$ satisfy: for any finite set of distinct action--reward pairs $\{(a_i, r_i)\}_{i=1}^N$, there exist encoder parameters such that the induced gating vectors $\{g_i = g_u^{(l,h)}(E(f_a(a_i), r_i))\}$ are pairwise distinct for each head $h$ and layer $l$.
\end{assumption}

\begin{assumption}[Universality of Transformers on augmented tokens]
\label{ass:transformer-universal}
Let $\tilde{x}_j = [x_j; g_j] \in \mathbb{R}^{d + d_m}$. The chosen family of standard Transformers is universal on compact finite-length sequences of augmented tokens $\tilde{x}_{1:t}$: for every continuous target map defined on that sequence domain and every $\varepsilon > 0$, there exists a finite-width, finite-depth causal Transformer that approximates it uniformly to within $\varepsilon$.
\end{assumption}

\begin{assumption}[Target regularity]
\label{ass:lipschitz}
The target decision map $F^*: \mathcal{H}_t \to \Delta(\mathcal{A})$ is Lipschitz on the compact history domain induced by Assumption~\ref{ass:compact}. That is, there exists $L_{F^*} > 0$ such that for all $H_t, H'_t \in \mathcal{H}_t$,
\begin{equation}
\|F^*(H_t) - F^*(H'_t)\| \leq L_{F^*} \cdot d_{\mathcal{H}}(H_t, H'_t),
\end{equation}
where $d_{\mathcal{H}}$ is a suitable metric on the history space (e.g., concatenated Euclidean distance on observations, actions, and rewards).
\end{assumption}

\subsection{Proof of Hierarchy and Expressiveness Theorems}
\label{app:proof-theo}


\begin{proof}[Proof of Theorem~\ref{thm:strict-hierarchy}]

\textbf{Part 1: $\mathcal{F}_{\TF}^{\obs} \subseteq \mathcal{F}_{\UAT}$.}

For any Vanilla Transformer $\TF_\theta \in \mathcal{F}_{\TF}^{\obs}$ with parameters $\theta = (W_{Q,h}^{(l)}, W_{K,h}^{(l)}, W_{V,h}^{(l)})_{l,h}$, we construct a corresponding $\UAT_{\theta'}$ with the same base projection matrices and zero utility-affine terms. Specifically, we set $W_Q^{(u,l)} = W_K^{(u,l)} = W_V^{(u,l)} = \mathbf{0}$ for all $l$; set $U_{Q,h}^{(l)} = U_{K,h}^{(l)} = U_{V,h}^{(l)} = \mathbf{0}$ for all $l, h$; set $W_s^{(l,h)} = \mathbf{0}$ and $W_\rho^{(h)} = \mathbf{0}$ for all $l, h$, so that $g_j^{(l,h)} = 0$ and $\rho_t^{(h)} = 0$ for all $j, t, l, h$.

Substituting into Eqs.~\eqref{eq:q-prime}--\eqref{eq:v-prime}:
\begin{align*}
\tilde{q}_t^{(l,h)} &= W_{Q,h}^{(l)} x_t + \diag(0) \cdot \mathbf{0} \cdot x_t + \mathbf{0} \cdot 0 = W_{Q,h}^{(l)} x_t, \\
\tilde{k}_j^{(l,h)} &= W_{K,h}^{(l)} x_j + \diag(0) \cdot \mathbf{0} \cdot x_j + \mathbf{0} \cdot 0 = W_{K,h}^{(l)} x_j, \\
\tilde{v}_j^{(l,h)} &= W_{V,h}^{(l)} x_j + \diag(0) \cdot \mathbf{0} \cdot x_j + \mathbf{0} \cdot 0 = W_{V,h}^{(l)} x_j.
\end{align*}
These are exactly the standard attention projections. Therefore $\TF_\theta = \UAT_{\theta'}$, and every function in $\mathcal{F}_{\TF}^{\obs}$ is also in $\mathcal{F}_{\UAT}$.

\textbf{Part 2: $\mathcal{F}_{\TF}^{\obs} \neq \mathcal{F}_{\UAT}$.}

Let $T \geq 2$ and consider a feedback-informative task (Definition~\ref{def:feedback-informative}). There exist histories $H_t, H'_t$ sharing the same observation sequence $\{o_j\}_{j\leq t}$ but differing at some position $j^* < t$ in their action--reward outcomes: $(a_{j^*}, r_{j^*}) \neq (a'_{j^*}, r'_{j^*})$, and the optimal policies differ: $\pi_t^*(\cdot \mid H_t) \neq \pi_t^*(\cdot \mid H'_t)$.

We construct a UAT that distinguishes $H_t$ and $H'_t$:

By Assumption~\ref{ass:encoder}, there exist utility encoder parameters such that $u_{j^*+1} = E(f_a(a_{j^*}), r_{j^*}) \neq E(f_a(a'_{j^*}), r'_{j^*}) = u'_{j^*+1}$. Consequently, the local gates at position $j^*+1$ for layer $l=1$ and head $h=1$ satisfy $g_{j^*+1}^{(1,1)} \neq g'^{(1,1)}_{j^*+1}$. Let $d = g_{j^*+1}^{(1,1)} - g'^{(1,1)}_{j^*+1} \in \mathbb{R}^{d_m}$. Since $d \neq 0$, there exists an index $k \in \{1, \ldots, d_m\}$ such that $d_k \neq 0$.

We now choose the utility-affine matrices to exploit this difference. Select $W_K^{(u,1)}$ such that its $k$-th row $w_{K,k}^{(u,1)\top}$ satisfies $w_{K,k}^{(u,1)\top} x_{j^*+1} \neq 0$. This is feasible because $x_{j^*+1}$ is a fixed non-zero vector (it depends on $o_{j^*+1}$ via the input embedding), and we can freely set $W_K^{(u,1)}$.

Similarly, select $W_Q^{(1)}$ such that its $k$-th row $w_{Q,k}^{(1)\top}$ satisfies $w_{Q,k}^{(1)\top} x_t \neq 0$. This is also feasible because $x_t$ is fixed and non-zero.

For all other rows $m \neq k$ of $W_K^{(u,1)}$, choose them such that $w_{K,m}^{(u,1)\top} x_{j^*+1} = 0$ (e.g., set those rows to zero). Then the attention logit difference at position $(t, j^*+1)$ for head $h=1$ is:
\begin{align*}
\Delta \ell_{t, j^*+1}^{(1)} &= x_t^\top W_Q^{(1)\top} \diag(g_{j^*+1}^{(1,1)}) W_K^{(u,1)} x_{j^*+1} - x_t^\top W_Q^{(1)\top} \diag(g'^{(1,1)}_{j^*+1}) W_K^{(u,1)} x_{j^*+1} \\
&= x_t^\top W_Q^{(1)\top} \diag(d) W_K^{(u,1)} x_{j^*+1} \\
&= \sum_{m=1}^{d_m} d_m \cdot (w_{Q,m}^{(1)\top} x_t) \cdot (w_{K,m}^{(u,1)\top} x_{j^*+1}) \\
&= d_k \cdot (w_{Q,k}^{(1)\top} x_t) \cdot (w_{K,k}^{(u,1)\top} x_{j^*+1}) \neq 0.
\end{align*}
The last equality holds because $d_m \cdot (w_{K,m}^{(u,1)\top} x_{j^*+1}) = 0$ for all $m \neq k$ (by construction), and the $k$-th term is non-zero (by Steps 2--3).

Since the softmax function is strictly monotone in each logit (holding others fixed), a non-zero logit difference implies a non-zero difference in the attention weight $\alpha_{t, j^*+1}$. Consequently, the attention output $\bar{h}_t^{(1,1)}$ differs between $H_t$ and $H'_t$. This difference propagates through the residual connection, FFN, and subsequent layers. By induction over layers, the final representation $h_t^{(L)}(H_t) \neq h_t^{(L)}(H'_t)$.

Finally, since the action space $|\mathcal{A}| \geq 2$ and the optimal policies differ at $H_t$ versus $H'_t$, we can choose the task head $(W_{\text{head}}, b_{\text{head}})$ such that $\pi_\theta(\cdot \mid H_t) \neq \pi_\theta(\cdot \mid H'_t)$. For instance, if $h_t^{(L)}(H_t) \neq h_t^{(L)}(H'_t)$, then for any action $a$ where $\pi_t^*(a \mid H_t) > \pi_t^*(a \mid H'_t)$, set $W_{\text{head}}$ to amplify the component corresponding to $a$ sufficiently to maintain the ordering.

Thus, this UAT realizes a mapping that distinguishes $H_t$ and $H'_t$, which no observation-only Transformer can do (Proposition~\ref{prop:obs_blind}). Hence $\UAT \notin \mathcal{F}_{\TF}^{\obs}$, proving $\mathcal{F}_{\TF}^{\obs} \neq \mathcal{F}_{\UAT}$.
\end{proof}

\begin{proof}[Proof of Theorem~\ref{thm:approximation}]

We construct the approximating UAT through five steps.

\textbf{Step 1: Decomposition of $F^*$.}
Write $F^*(H_t) = F^*(\{o_j\}_{j\leq t}, \{a_j, r_j\}_{j<t})$. Since $F^*$ is Lipschitz on a compact domain (Assumption~\ref{ass:lipschitz}), it is uniformly continuous (Heine-Cantor theorem). Therefore, for any $\varepsilon > 0$, there exists $\delta > 0$ such that if two histories are $\delta$-close in the metric $d_{\mathcal{H}}$, their images under $F^*$ are $(\varepsilon/3)$-close.

\textbf{Step 2: Utility encoder approximation.}
The utility encoder $E$ is a two-layer MLP. By the universal approximation theorem for MLPs~\cite{hornik1989multilayer}, for any continuous target map $\phi^*: \mathcal{A} \times \mathcal{R} \to \mathbb{R}^{d_u}$ and any $\delta' > 0$, there exists $E$ such that
\begin{equation}
\|E(f_a(a_j), r_j) - \phi^*(a_j, r_j)\| < \delta', \quad \forall (a_j, r_j) \in \mathcal{A} \times \mathcal{R}.
\label{eq:encoder-approx}
\end{equation}

The gate readouts $g_u^{(l,h)}(u) = \tanh(W_s^{(l,h)} u)$ and $f_u^{(h)}(\bar{u}) = \tanh(W_\rho^{(h)} \bar{u})$ are continuous and map onto $(-1, 1)^{d_m}$. For any target gating values $(g_j^*, \rho_t^*) \in [-1, 1]^{d_m} \times [-1, 1]^{d_m}$, there exist encoder and readout parameters achieving $\|g_j - g_j^*\| < \delta'$ and $\|\rho_t - \rho_t^*\| < \delta'$ for all $j \leq t$.

\textbf{Step 3: Fixed-gating equivalence to standard Transformer.}
For any fixed gating configuration $\mathbf{g} = (\{g_j\}_{j\leq t}, \rho_t) \in [-1, 1]^{d_m \times t} \times [-1, 1]^{d_m}$, the UAT attention projections in Eqs.~\eqref{eq:q-prime}--\eqref{eq:v-prime} become:
\begin{align*}
\tilde{q}_t &= \underbrace{\big(W_Q + \diag(\rho_t) W_Q^{(u)}\big)}_{\tilde{W}_Q(\rho_t)} x_t + \underbrace{U_Q \rho_t}_{b_Q(\rho_t)}, \\
\tilde{k}_j &= \underbrace{\big(W_K + \diag(g_j) W_K^{(u)}\big)}_{\tilde{W}_K(g_j)} x_j + \underbrace{U_K g_j}_{b_K(g_j)}, \\
\tilde{v}_j &= \underbrace{\big(W_V + \diag(g_j) W_V^{(u)}\big)}_{\tilde{W}_V(g_j)} x_j + \underbrace{U_V g_j}_{b_V(g_j)}.
\end{align*}
For fixed $\mathbf{g}$, the matrices $\tilde{W}_Q(\rho_t), \tilde{W}_K(g_j), \tilde{W}_V(g_j)$ and biases $b_Q(\rho_t), b_K(g_j), b_V(g_j)$ are constants. Therefore, the UAT with fixed gating is \textbf{exactly equivalent} to a standard Transformer operating on augmented token representations. By Assumption~\ref{ass:transformer-universal}, for any $\varepsilon' > 0$, there exists a finite-depth, finite-width standard Transformer that approximates $F^*$ uniformly to within $\varepsilon'$ on the domain of augmented tokens.

\textbf{Step 4: Gating-space covering argument.}
Let $\mathcal{G} = [-1, 1]^{d_m \times t} \times [-1, 1]^{d_m}$ be the space of all possible gating configurations. Since $\mathcal{G}$ is a compact subset of a finite-dimensional Euclidean space, it is totally bounded: for any $\delta'' > 0$, there exists a finite $\delta''$-net $\{\mathbf{g}^{(i)}\}_{i=1}^N \subset \mathcal{G}$ such that every $\mathbf{g} \in \mathcal{G}$ is within $\delta''$ of some net center $\mathbf{g}^{(i)}$.

By the Lipschitz continuity of UAT's composite mapping (Assumption~\ref{ass:lipschitz} and the smoothness of attention and FFN operations), the output difference between using $\mathbf{g}$ and its nearest net center $\mathbf{g}^{(i^*)}$ is bounded by $L_{\text{UAT}} \cdot \delta''$ for some Lipschitz constant $L_{\text{UAT}}$ depending on the network architecture but not on the specific input. Choosing $\delta'' = \varepsilon/(3L_{\text{UAT}})$ ensures this discretization error is at most $\varepsilon/3$.

\textbf{Step 5: Composition and error accounting.}
We combine the errors from Steps 2--4 using the triangle inequality. For any history $H_t$:
\begin{align*}
\|F_{\UAT}(H_t) - F^*(H_t)\| &\leq \underbrace{\|F_{\UAT}(H_t) - F_{\UAT}(H_t; \text{ideal gates})\|}_{\text{(a) encoder error}} \\
&\quad + \underbrace{\|F_{\UAT}(H_t; \text{ideal gates}) - F_{\UAT}(H_t; \text{nearest net center})\|}_{\text{(b) discretization error}} \\
&\quad + \underbrace{\|F_{\UAT}(H_t; \text{nearest net center}) - F^*(H_t)\|}_{\text{(c) Transformer approximation error}}.
\end{align*}

Term (a): By Step 2, choose encoder parameters such that $\|g_j - g_j^*\| < \delta'$ and $\|\rho_t - \rho_t^*\| < \delta'$ with $\delta' = \varepsilon/(3L_{\text{UAT}})$. The Lipschitz continuity of UAT ensures term (a) $< \varepsilon/3$.

Term (b): By Step 4, the finite net guarantees term (b) $< \varepsilon/3$.

Term (c): By Step 3 (Assumption~\ref{ass:transformer-universal}), choose depth and width such that the standard Transformer approximates $F^*$ to within $\varepsilon/3$ at each net center. Hence term (c) $< \varepsilon/3$.

Summing the three bounds:
\[
\|F_{\UAT}(H_t) - F^*(H_t)\| < \frac{\varepsilon}{3} + \frac{\varepsilon}{3} + \frac{\varepsilon}{3} = \varepsilon.
\]
Since this holds for all $H_t \in \mathcal{H}_t$, we obtain the uniform bound in Eq.~\eqref{eq:uniform-approx}.
\end{proof}

\begin{corollary}[Existence of optimal UAT for feedback-informative tasks]
\label{cor:existence}
Under Assumptions~\ref{ass:compact}--\ref{ass:lipschitz}, for any feedback-informative task (Definition~\ref{def:feedback-informative}), there exists a UAT that achieves strictly lower imitation loss than any observation-only Transformer.
\end{corollary}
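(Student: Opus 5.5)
The plan is to compare the two classes on the imitation (cross-entropy) loss by splitting the history domain into pairs of observation-equivalent histories. Fix a feedback-informative task. By Definition~\ref{def:feedback-informative} there are histories $H_t,H'_t$ with the same observation sequence and $\pi_t^*(\cdot\mid H_t)\neq\pi_t^*(\cdot\mid H'_t)$, both charged by the loss with positive weight. I will assume the history distribution gives positive mass to neighbourhoods of both histories; a Lipschitz perturbation then extends any pointwise gap to these neighbourhoods.

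\textbf{Lower bound for observation-only models.} By Proposition~\ref{prop:obs_blind}, every $F\in\mathcal{F}_{\mathrm{TF}}^{\mathrm{obs}}$ outputs a single distribution $p$ on both $H_t$ and $H'_t$. The imitation loss restricted to this pair is $\mathrm{CE}(\pi^*_t(\cdot\mid H_t),p)+\mathrm{CE}(\pi^*_t(\cdot\mid H'_t),p)$. By Gibbs' inequality this is at least $\mathcal{E}:=H(\pi^*_t(\cdot\mid H_t))+H(\pi^*_t(\cdot\mid H'_t))+\kappa$, where $\kappa=\min_p\bigl[\mathrm{KL}(\pi^*\|p)+\mathrm{KL}(\pi'^*\|p)\bigr]>0$ because the two targets differ. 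This bound is uniform over the observation-only class. The same holds in neighbourhoods, provided the observation-only outputs are forced to coincide on matched pairs there. Taking expectations then gives an excess of at least $c\kappa$ over the entropy floor for every observation-only Transformer.

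\textbf{Upper bound for UAT.} Take $F^*=\pi_t^*$ as the target; it is Lipschitz by Assumption~\ref{ass:lipschitz}. Theorem~\ref{thm:approximation} gives a UAT with $\sup\|F_{\UAT}-F^*\|<\varepsilon$. Cross-entropy is continuous in the predicted distribution only away from the simplex boundary. I therefore target a smoothed $F^*_\eta=(1-\eta)F^*+\eta\,\mathrm{Unif}$, which is still Lipschitz and has excess loss $O(\eta\log|\mathcal{A}|)$, and approximate that instead. The UAT's excess loss is then at most $O(\eta)+O(\varepsilon/\eta)$. This can be made below $c\kappa$, which yields strict inequality against every element of $\mathcal{F}_{\mathrm{TF}}^{\mathrm{obs}}$ simultaneously.

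\textbf{Main obstacle.} The hard step is making the observation-only lower bound hold on a set of positive measure and not just at the two points. One must also guarantee it is uniform, since the infimum over observation-only Transformers might only approach the bound, and strictness needs the UAT to beat that infimum. My first attempt is to use compactness: Lipschitz continuity of $\pi_t^*$ propagates the gap $\kappa$ to $\delta$-balls of matched histories (same observations, perturbed feedback), and Proposition~\ref{prop:obs_blind} applies pointwise to each matched pair, so the gap integrates to a constant. If the data distribution assigns zero mass there, the claim should be read with respect to a distribution supported on the full history domain, and I would add that as an explicit hypothesis.
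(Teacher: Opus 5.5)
Your argument is sound under the hypothesis in your first paragraph (both $H_t$ and $H'_t$ carry positive weight in the loss). It uses the same two ingredients as the paper: Theorem~\ref{thm:approximation} for the UAT, and Proposition~\ref{prop:obs_blind} to collapse observation-only models on the matched pair. The difference is that you run them at the level of the cross-entropy loss, while the paper stays in the sup norm. The paper takes $\|F_{\UAT_1}-F^*\|_\infty<\varepsilon/2$, chooses $\varepsilon$ below half of $\min_{H_t}\|F^*(H_t)-F_{\TF}(H_t)\|$, and asserts that this yields lower imitation loss.

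Your route fills three things that argument leaves open.
\begin{itemize}
\item The paper's separation constant depends on the competing $\TF$ and can vanish at one member of the pair, so its UAT would depend on the competitor. Your $\kappa$ is twice the Jensen--Shannon divergence of the two targets, attained at their midpoint, and is uniform over the whole observation-only class.
\item Sup-norm closeness does not control log-loss near the simplex boundary. Your $\eta$-smoothing handles this; its cost is in fact $\mathrm{KL}(\pi^*\|F^*_\eta)\le-\log(1-\eta)$.
\item You make explicit that the loss must actually weight the pair.
\end{itemize}
The paper's version buys only brevity.

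One correction to your fallback hypothesis: positive mass on neighbourhoods of $H_t$ and $H'_t$, or even full support of the history distribution, does not suffice. Proposition~\ref{prop:obs_blind} equates outputs only on histories with \emph{identical} observation sequences. You therefore need mass on observation-matched pairs jointly. Either atoms at $H_t,H'_t$ would do, or conditional observation laws under the two feedback patterns with comparable densities near the shared observation sequence, so that your pairing map is measure-preserving up to a bounded density ratio. Suppose instead the two neighbourhoods concentrate on disjoint sets of observation sequences. Then an observation-only Transformer can separate them through the observations alone, no excess of order $\kappa$ is forced, and the strict inequality can fail. State the hypothesis in this coupled form, or simply as positive weight on the pair, rather than as full support.
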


\begin{proof}
Let $F^*$ be the expert policy. By Theorem~\ref{thm:approximation}, there exists a UAT, called $\UAT_1$ such that $\|F_{\UAT_1} - F^*\|_\infty < \varepsilon/2$ for any $\varepsilon > 0$. By Definition~\ref{def:feedback-informative}, there exist histories $H_t, H'_t$ where $F^*(H_t) \neq F^*(H'_t)$ but any observation-only Transformer produces $F_{\TF}(H_t) = F_{\TF}(H'_t)$. Choosing $\varepsilon$ smaller than half the minimum separation $\min_{H_t} \|F^*(H_t) - F_{\TF}(H_t)\|$ over all feedback-informative histories ensures that $\UAT_1$ outperforms every $\TF \in \mathcal{F}_{\TF}^{\obs}$.
\end{proof}

\section{Empirical Logit Decomposition}
\label{app:logit-decomposition}

Recall the utility-affine projections in Eqs.~\eqref{eq:q-prime}--\eqref{eq:k-prime}. Dropping the additive bias channels $U_Q \rho_t,\,U_K g_j$ for clarity, the obs-grounded part of the projections reads
\begin{equation*}
  \tilde{q}_t \;=\; \bigl(W_Q + \operatorname{diag}(\rho_t)\, W_Q^{(u)}\bigr)\, x_t,
  \qquad
  \tilde{k}_j \;=\; \bigl(W_K + \operatorname{diag}(g_j)\, W_K^{(u)}\bigr)\, x_j .
\end{equation*}
Expanding the inner product $\tilde{q}_t^{\top} \tilde{k}_j$ gives four logit terms:
\begin{align}
  \tilde{q}_t^{\top} \tilde{k}_j
  &= \underbrace{x_t^\top W_Q^\top W_K\, x_j}_{\ell^{(0)}:\;\text{obs-only}}
   + \underbrace{x_t^\top W_Q^\top \,\operatorname{diag}(g_j)\, W_K^{(u)}\, x_j}_{\ell^{(1)}:\;\text{obs}\times\text{fb}}
   \notag\\[-2pt]
  &\quad
   + \underbrace{x_t^\top \bigl(\operatorname{diag}(\rho_t)\, W_Q^{(u)}\bigr)^{\!\top} W_K\, x_j}_{\ell^{(2)}:\;\text{fb}\times\text{obs}}
   + \underbrace{x_t^\top \bigl(\operatorname{diag}(\rho_t)\, W_Q^{(u)}\bigr)^{\!\top}\!\!\operatorname{diag}(g_j)\, W_K^{(u)}\, x_j}_{\ell^{(3)}:\;\text{fb}\times\text{fb}} .
  \label{eq:four-terms}
\end{align}
Term $\ell^{(0)}$ is identical to vanilla attention. $\ell^{(1)}$ re-ranks each key by its per-token utility $g_j$, while $\ell^{(2)}$ reshapes the query according to the current regime gate $\rho_t$. The cross term $\ell^{(3)}$ activates only when both gates are simultaneously non-trivial. UAT exactly recovers a Vanilla Transformer when $\rho_t,g_j\to\mathbf{0}$, in which case $\ell^{(1)}=\ell^{(2)}=\ell^{(3)}=0$.

The signed range $[-1,1]$ could be read as a directional coefficient on the learned utility-affine bases. Positive entries reinforce the corresponding rows of $W^{(u)}$, negative entries subtract or reverse them, and entries near zero suppress that modulation channel. Hence the direction of the logit change is jointly determined by the gate signs $(g_j,\rho_t)$ and the learned matrices $W_Q^{(u)},W_K^{(u)}$, which lets UAT either increase or decrease retrieval weights according to feedback.

\section{Supplementary Details of Main Experiment}
\label{appendix:exp}

\subsection{Benchmark Datasets and Environment Details}
\label{app:data-details}

\textbf{Domain A (Synthetic DarkRoom).}
A $10{\times}10$ DarkRoom grid~\citep{zintgraf2020varibad,laskin2023ad} with an invisible goal. Each episode starts from a random non-goal cell.
\emph{Observation} (10-dim): the $3{\times}3$ local wall mask (9 binary entries) plus a noisy binary cue that fires with probability $0.5$ when the agent is within Manhattan distance~\citep{agg2001} 2 of the goal.
\emph{Actions} (5): stay/ up / down / left / right (where $a_t = 0$ stands for stay). \emph{Reward}: $+2$ at the goal, $+0.5$ for reducing Manhattan distance, $-0.5$ for increasing it, and $-0.01$ per step. The hidden goal remains fixed within each episode and follows one of three schedules across episodes: \textit{gradual} drift, \textit{abrupt} jumps every six episodes, or a four-waypoint \textit{cyclic} schedule. Model histories and utility states are reset at episode boundaries, and the current goal is never observed directly.

\textbf{Domain B (Non-Stationary Sepsis Treatment).}
Starting states are 39,232 ICU admission snapshots from MIMIC-III~\citep{johnson2016mimic} via the gym-sepsis simulator~\citep{komorowski2018ai}. We use 20k for training and 5k for testing (IID split over patients).

\emph{State preprocessing.}
Raw states are 46-dimensional clinical vectors (vitals, labs, demographics). We standardize them on the training pool and reduce them to 20 PCA components (about 74\% explained variance).

\emph{Action space.}
The original 24 discretized treatment combinations ($4$ vasopressor levels $\times$ $6$ fluid levels) are mapped to 5 ordered intensity clusters by $k$-means on the treatment-level grid. Clusters are ordered by centroid treatment intensity: cluster 0 corresponds to minimal/no treatment, and cluster 4 to the highest-intensity treatment cluster.

\emph{Reward.}
The step reward is composed of a dense intermediate signal and a sparse terminal outcome. At each non-terminal step, the agent receives $r_t = 0.1\,\Delta\mathrm{SOFA}_t + 0.05\,\Delta\mathrm{Lactate}_t$~\citep{johnson2016mimic}, where $\Delta(\cdot)$ denotes improvement (decrease) relative to the previous step. At episode termination, a bonus of $+1$ is awarded for recovery ($\mathrm{SOFA}{<}0.3$ and $\mathrm{Lactate}{<}0.3$) and $-1$ for death; early termination is stochastic with death probability $p_{\text{die}}$ and recovery probability $p_{\text{rec}}$ that depend on the current SOFA and lactate levels. Treatment resistance is modelled by decaying efficacy after $\ge\!3$ consecutive identical actions.

\emph{Designed Hidden regime shift.}
Each episode contains 4--6 hidden regime segments with randomized lengths (sampled and normalized to the $T{=}90$ horizon), switching among three efficacy regimes:
\textbf{(0) Fluid-responsive}: fluids strongly improve MAP and lactate; vasopressors give mild benefit. Expert action: moderate-to-high fluid (cluster 3).
\textbf{(1) Vasopressor-dependent}: vasopressors strongly improve MAP; fluids are harmful through overload. Expert action: highest-intensity treatment (cluster 4).
\textbf{(2) Conservative}: any aggressive treatment is harmful; only minimal intervention (vaso\,$\le\!1$, fluid\,$\le\!1$) helps. Expert action: minimal treatment (cluster 0).
Regime identity is never revealed to the agent; it can only be inferred from how the physiological state responds to treatment. The expert observes the current regime label directly and selects the intensity level that the regime favours.

\textbf{Domain C (Financial Markets).}
Daily individual equities data in three stock index markets; see Table~\ref{tab:domC-markets}.

\emph{Data source.}
DJIA and HSI constituents are fixed as of April 2026, and their daily OHLCV series are obtained from Yahoo Finance~\citep{yahoo2025finance} historical data; CSI\,300 constituents and OHLCV series are collected through AkShare~\citep{akshare2024} using Chinese forward-adjusted (\emph{qfq}) prices. We build features and returns from these stock-level series and allocate weights across them (plus cash), rather than trading the index itself as a single asset. We use the common trading-date intersection within each market, then restrict every constituent series to that shared calendar before computing indicators and building stride-$1$ length-$T$ windows.

\emph{Observation.}
For each asset, the per-step 11-dim feature vector consists of technical indicators computed from OHLCV: RSI, MACD, MACD signal, MACD histogram, ATR, ADX, CCI, MFI, OBV, stochastic \%K, and stochastic \%D~\citep{kara2011}. Features are z-scored per asset using training-period statistics only and clipped to $[-5,5]$. A market with $N$ assets has native observation size $N{\times}11$; the joint model pads all markets to $N_{\max}{=}45$, giving a fixed $45{\times}11{=}495$ input.

\emph{Action.}
A continuous portfolio weight vector over risky assets plus one cash position, output by a softmax. For the shared joint model, the action head has $N_{\max}{+}1{=}46$ dimensions (cash plus 45 padded asset slots). During market-specific rollout and evaluation, the output is sliced to the actual $N{+}1$ active dimensions and re-normalised before computing portfolio returns. In the reported joint-market experiments we set transaction costs to zero for all three markets, matching the market configuration used by the released scripts~\citep{kolm2014portfolio}.

\emph{Training and evaluation protocol (joint cross-market).} Domain~C uses one shared checkpoint per seed. Each market is first split chronologically into an 80\% training segment and a 20\% test segment on its own trading calendar. We then construct stride-1 windows of $T{=}60$ days independently within each split, pad the asset dimension to $N_{\max}$, and concatenate the three markets' training windows into a pooled multi-market training set. Each window is treated as an independent trajectory sample. The shared checkpoint is evaluated separately on each market's held-out windows. Targets use next-day close-to-close price relatives, avoiding same-day leakage from technical indicators. Outputs are padded to $N_{\max}{+}1$ dimensions including cash, then sliced and re-normalised to the active $N{+}1$ assets during evaluation. During evaluation, each rolling window contributes only its final portfolio prediction, yielding one return per trading date for the computation of Sharpe ratio and CAGR.

\begin{table}[h]
\centering\footnotesize
\renewcommand{\arraystretch}{1.05}
\setlength{\tabcolsep}{4pt}
\caption{Domain~C markets. Calendar coverage is through 2026 at data-collection time (see Appendix~\ref{app:data-details} for how experiments form constituent-aligned panels).}
\label{tab:domC-markets}
\begin{tabular}{@{}lrl@{}}
\toprule
\textbf{Market} & \textbf{Assets} & \textbf{Period (calendar)} \\
\midrule
DJIA      & 30 & 2013--2026 \\
HSI       & 45 & 2015--2026 \\
CSI\,300  & 30 & 2018--2026 \\
\bottomrule
\end{tabular}
\end{table}

\textbf{Domain D (Delayed-Feedback Recommendation).}
MovieLens-25M~\citep{harper2015movielens} contains 25M ratings from 162,541 users on 62,423 movies. We filter to users with $\ge\!100$ interactions and items with $\ge\!50$ ratings, then sample up to 20k eligible user sequences in chronological order.

\emph{Genre-regime construction.}
The MovieLens genres are mapped to $K{=}4$ broad preference groups by hand-merging the official pipe-separated genre tags from the dataset. Each user sequence keeps its real chronological item order and is augmented with a hidden regime schedule that switches every 12--20 steps. The regime is exposed only through delayed feedback:
\[
r_t = (\text{rating}_t / 5)\cdot\mathbf{1}[\text{genre}(i_t)=c_t] - 0.2\cdot\mathbf{1}[\text{genre}(i_t)\neq c_t],
\]
where $c_t$ is the active cluster; the main experiment uses delay $d{=}1$. This follows relative typical settings~\citep{chapelle2014modeling, ng1999policy}.

\emph{State, action, and delayed feedback.}
At prediction position $t$, the model observes the logged item prefix through the completed interaction $i_t$ and predicts a regime-aware expert recommendation target $a_t^*$ over the filtered vocabulary; this target is selected from the active genre cluster and need not equal the logged successor. The synthetic reward $r_t$ is computed from the completed logged interaction $(i_t,\mathrm{rating}_t)$ and is therefore dataset-supplied historical feedback rather than feedback for $a_t^*$. In the main setting ($d{=}1$), feedback-aware models receive $\tilde r_t=r_{t-d}$, zero-padded when $t-d<1$. The sequence length is $T{=}50$.

\subsection{Expert Construction and Domain-Specific Objectives}
\label{app:expert-details}

All domains use the behavioral cloning protocol in section~\ref{sec:training}: models observe $\mathcal{H}_t$ and imitate a privileged expert.

\textbf{Domain A (Synthetic DarkRoom).}
The expert observes the true goal and selects the Manhattan-optimal movement. The loss is cross-entropy~\citep{shannon1948mathematical} over 5 actions.

\textbf{Domain B (Non-Stationary Sepsis Treatment).}
The expert observes the active treatment-efficacy regime and selects the intensity level that the regime favors described in Appendix~\ref{app:data-details}. The training objective is cross-entropy~\citep{shannon1948mathematical} over the 5 clustered intensity levels.

\textbf{Domain C (Financial Markets).}
The expert is constructed from hindsight~\citep{hinton2015distilling}: given realized next-period price relatives $y_t \in \mathbb{R}^{N+1}$ including the cash asset ($y_t^{(0)}{=}1$), the target portfolio is a soft label with cash logit $0$ and risky-asset logits $(y_t^{(i)}-1)/\tau$ for $i=1,\ldots,N$; padded asset slots are masked before the softmax. The training objective is the KL divergence~\citep{cover2006elements} between the model output and this hindsight label:
\begin{equation}
  \mathcal{L}_{\mathrm{KL}}
  = \frac{1}{BT}\sum_{b=1}^{B}\sum_{t=1}^{T}
    D_{\mathrm{KL}}\!\bigl(\pi^*_{b,t} \,\|\, \pi_\theta(\cdot \mid \mathcal{H}_{b,t})\bigr).
  \label{eq:kl-loss}
\end{equation}

\textbf{Domain D (Delayed-Feedback Recommendation).}
The expert observes the active hidden genre regime and selects a target item from that regime's genre cluster. The training objective is cross-entropy~\citep{shannon1948mathematical} over the item vocabulary at each sequence position; the delayed reward is an input signal for feedback-aware models, not the supervised target.

\textbf{Feedback access by model family.}
Feedback-blind baselines (Vanilla Transformer, GTrXL, TENT, CoTTA) receive only $\{o_1,\ldots,o_t\}$. DT-style baselines expose shifted reward/return information as input; A-DT also concatenates the shifted action. UAT sends the shifted logged action and the currently available delayed reward to the utility encoder; in Domain~D, the delay is applied to the reward channel, while the supervised target $a_t^*$ is never supplied as feedback.

\subsection{Evaluation Metrics}
\label{sec:metrics}

\textbf{Domain A (Synthetic DarkRoom).}
\emph{Accuracy} against the oracle optimal action measures overall decision quality. \emph{Navigation efficiency} $\eta = d^*/\max(t_{\mathrm{goal}},\, d^*)$, where $d^*$ is the start-to-goal Manhattan distance~\citep{agg2001} and $t_{\mathrm{goal}}$ is the step at which the agent first reaches the goal ($t_{\mathrm{goal}}{=}T$ if never reached), captures how directly the agent exploits feedback to locate the hidden goal; $\eta{=}1$ corresponds to shortest-path navigation. Each shift pattern (gradual, abrupt, cyclic) is evaluated separately.

\textbf{Domain B (Non-Stationary Sepsis Treatment).}
\emph{Accuracy} against the regime-aware oracle on the 5-action clustered space serves as the primary metric, with \emph{per-regime accuracy} across the three treatment efficacy regimes (fluid-responsive, vasopressor-dependent, conservative). \emph{Post-shift accuracy}, computed over the first 5 steps after each regime switch, quantifies adaptation speed.

\textbf{Domain C (Financial markets).}
\emph{Annualized Sharpe ratio}~\citep{led2008} and \emph{annualized return}~\citep{pagan1996} measure risk-adjusted and absolute profitability. Both are computed from the simple daily portfolio returns on each held-out market. Annualized return is reported as CAGR, $\mathrm{wealth}_{T}^{252/T}-1$ for the daily equity-index markets. Transaction costs are zero in the reported joint-market runs~\citep{kolm2014portfolio}.

\textbf{Domain D (Delayed-Feedback Recommendation).}
\emph{Hit Rate@$K$} (HR@$K$) and \emph{NDCG@$K$}~\citep{he2017ncf} ($K{=}10,20$) on held-out user sequences, evaluated under the leave-one-out protocol with 99 sampled negatives. The positive item is the regime-aware expert target; HR@$K$ measures whether it appears in the top-$K$, and NDCG@$K$ additionally rewards higher ranks.

\subsection{Hyperparameters}
\label{app:exp-hyp}

\begin{table}[t]
\centering\footnotesize
\renewcommand{\arraystretch}{1.05}
\setlength{\tabcolsep}{4pt}
\caption{Hyperparameters (shared across main / ablation). ``UE \& regime lr'' applies to encoder $E$ and readout $W_\rho$.}
\label{tab:hyperparams}
\begin{tabular}{@{}llcccc@{}}
\toprule
& \textbf{Parameter} & \textbf{Domain A} & \textbf{Domain B} & \textbf{Domain C} & \textbf{Domain D} \\
\midrule
\multirow{7}{*}{\rotatebox{90}{\scriptsize Shared}}
& $d_{\mathrm{model}}$ / $H$ / $d_{\mathrm{ff}}$ / $L$ & 64/4/128/3 & 64/4/128/3 & 128/4/256/4 & 64/4/128/3 \\
& Batch size $B$ & 256 & 256 & 128 $\times$ 3 = 384 & 256 \\
& Episode / seq length $T$ & 60 & 90 & 60 & 50 \\
& Optimizer & \multicolumn{4}{c}{Adam, grad clip $1.0$} \\
& Weight decay & $10^{-5}$ & $10^{-5}$ & $10^{-5}$ & $10^{-5}$ \\
& Base lr & 2e-3 & 2e-3 & 2e-3 & 1e-3 \\
& Dropout & 0.05 & 0.2 & 0.05 & 0.1 \\
\midrule
\multirow{7}{*}{\rotatebox{90}{\scriptsize UAT}}
& Utility dim $d_u$ & 16 & 16 & 32 & 16 \\
& Action emb dim $d_{\mathrm{act}}$ & 8 & 8 & 64 & 8 \\
& EMA decay $\lambda$ & 0.7 & 0.8 & 0.9 & 0.95 \\
& Gating init $W_s$, $W_\rho$ & \multicolumn{4}{c}{$\mathbf{0}$ (bias-free)} \\
& $W^{(u)}$ init / $U$ init & \multicolumn{4}{c}{$\mathcal{N}(0,0.1^2)$ / $\mathbf{0}$} \\
& UE hidden dim & 32 & 32 & 64 & 32 \\
& UE \& regime lr & 2e-2 & 2e-2 & 2e-2 & 2e-2 \\
\midrule
\multirow{2}{*}{\rotatebox{90}{\scriptsize Train}}
& Max epochs / patience & 500 / 100 & 500 / 100 & 500 / 100 & 500 / 100 \\
& Feedback delay $d$ & — & — & — & 1 \\
\bottomrule
\end{tabular}
\end{table}

Hyperparameter details are in Table~\ref{tab:hyperparams}. Shared architecture and optimizer settings are matched across Transformer-based methods within each domain. UAT-specific parameters were chosen by coarse validation search and reused in ablations unless stated otherwise.

\subsection{Additional Diagnostics for Main Experiment}\label{app:diag}

\begin{figure}[t]
    \centering
    \includegraphics[width=0.92\linewidth]{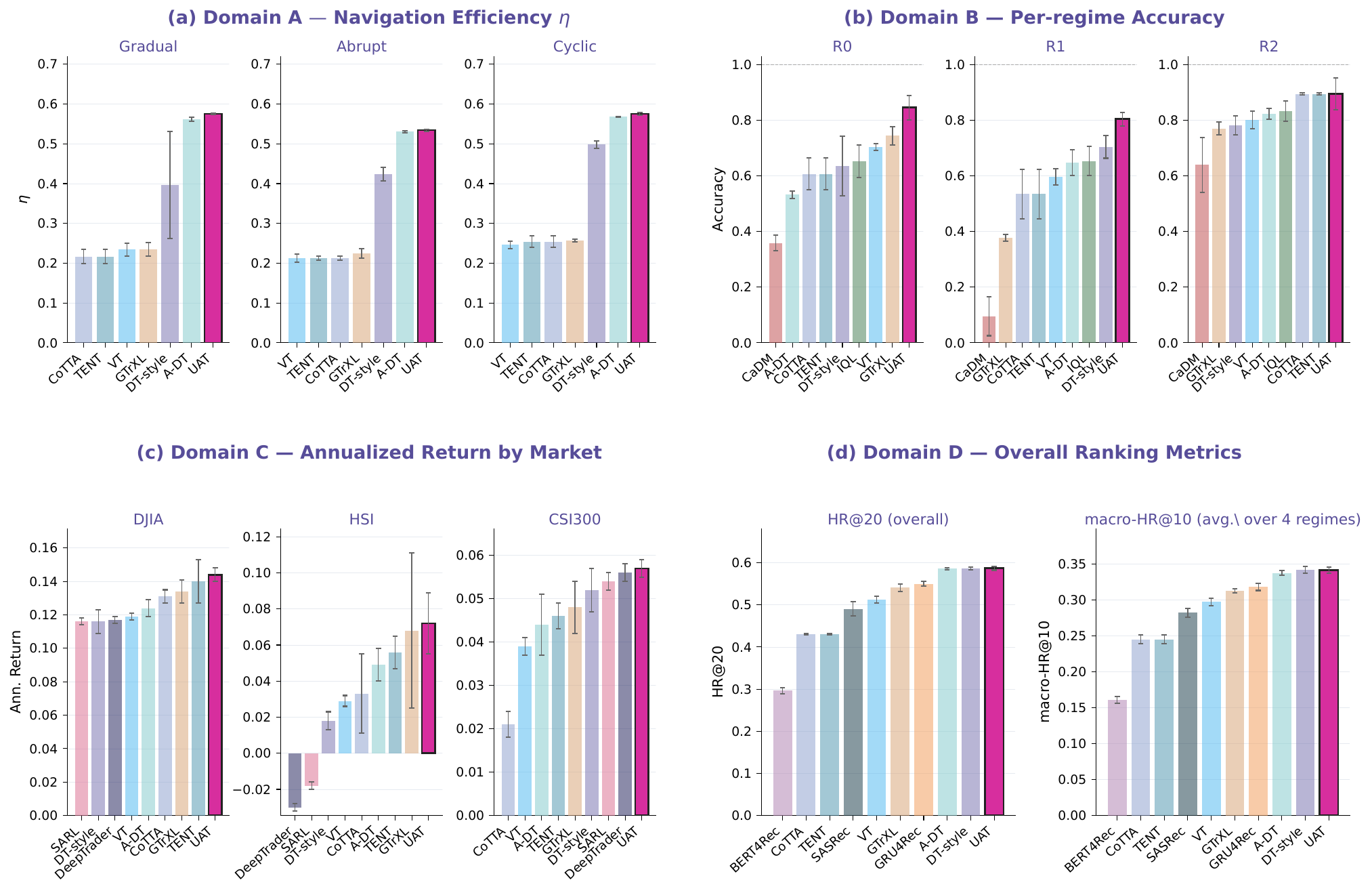}
    \caption{Additional diagnostic metrics. Bars sorted ascending; (a)~Domain~A navigation efficiency $\eta$. (b)~Domain~B per-regime acc. (c)~Domain~C annualized return. (d)~Domain~D HR@20 / HR@10 (over regimes). Error bars $=$ $\pm$std (3 seeds).}
    \label{fig:diag}
\end{figure}

Figure~\ref{fig:diag} reports additional diagnostic metrics not included in the main table. These metrics provide a more fine-grained view across navigation efficiency, regime-wise clinical accuracy, market-level returns, and recommendation ranking performance. 
Across these auxiliary measures, UAT remains consistently among the strongest methods, supporting the robustness of the main results beyond the primary evaluation metrics.

\section{Supplementary Details of Additional Experiments}
\label{app:ar}

\subsection{Ablation Experiment}
\label{app:ablation}

\begin{table}[t]
\centering\footnotesize
\renewcommand{\arraystretch}{1.05}
\setlength{\tabcolsep}{4pt}
\caption{Ablation variants. Same architecture / training as full UAT.}
\label{tab:ablation-design}
\begin{tabular}{@{}lll@{}}
\toprule
\textbf{Variant} & \textbf{Modification} & \textbf{Meaning} \\
\midrule
\multicolumn{3}{@{}l}{\emph{Group I: Utility Encoder}} \\
UAT-noAction  & $a_j{=}0$; reward only          & Isolate reward signal \\
UAT-noReward  & $r_j{=}0$; action only          & Isolate action signal \\
UAT-linear    & $u_j = W[a_j;\, r_j]$; linear   & Remove nonlinear encoding \\
\midrule
\multicolumn{3}{@{}l}{\emph{Group II: Utility-Affine Attention}} \\
UAT-noQ       & $W_Q^{(u)}{=}0$; $\rho_t$ disabled  & Remove query modulation \\
UAT-noKV      & $W_K^{(u)}{=}W_V^{(u)}{=}0$; $g_j$ disabled & Remove key/value modulation \\
UAT-noBias    & $U_Q{=}U_K{=}U_V{=}0$           & Remove observation-independent bias \\
\bottomrule
\end{tabular}
\end{table}

\begin{table*}[t]
\centering\footnotesize
\renewcommand{\arraystretch}{1.05}
\setlength{\tabcolsep}{2.0pt}
\newcommand{\sd}[1]{{\scriptsize\textcolor{gray!70}{$\pm$#1}}}
\newcommand{\val}[2]{#1\,\sd{#2}}
\newcommand{\valb}[2]{\textbf{#1}\,\sd{#2}}
\newcommand{\valu}[2]{\underline{#1}\,\sd{#2}}
\newcommand{\grp}[2]{\multirow{#1}{*}{\parbox{0.68in}{\centering\scriptsize #2}}}
\newcommand{\familyrule}{\addlinespace[1pt]\cmidrule(lr){1-12}\addlinespace[1pt]}
\caption{Ablation, 3 seeds. Columns match Table~\ref{tab:main}. \textbf{Bold}: best; \underline{underline}: second.}
\label{tab:ablation_AC}
\resizebox{\textwidth}{!}{%
\begin{tabular}{@{}ll ccc cc ccc cc@{}}
\toprule
\textbf{Group} & \textbf{Variant}
& \multicolumn{3}{c}{\textbf{A} (Acc)}
& \multicolumn{2}{c}{\textbf{B}}
& \multicolumn{3}{c}{\textbf{C} (SR)}
& \multicolumn{2}{c}{\textbf{D}} \\
\cmidrule(lr){3-5}\cmidrule(lr){6-7}\cmidrule(lr){8-10}\cmidrule(l){11-12}
& & Grad.\,$\uparrow$ & Abr.\,$\uparrow$ & Cyc.\,$\uparrow$
& Acc\,$\uparrow$ & Post\,$\uparrow$
& DJIA\,$\uparrow$ & HSI\,$\uparrow$ & CSI\,$\uparrow$
& HR\,$\uparrow$ & NDCG\,$\uparrow$ \\
\midrule

\grp{3}{Encoder}
& no-action
& \val{0.707}{0.02} & \val{0.605}{0.01} & \val{0.702}{0.02}
& \val{0.793}{0.02} & \val{0.589}{0.02}
& \val{1.062}{0.04} & \val{0.339}{0.10} & \val{0.407}{0.03}
& \valb{0.341}{0.001} & \valu{0.154}{0.001} \\

& no-reward
& \val{0.395}{0.01} & \val{0.415}{0.01} & \val{0.459}{0.02}
& \val{0.706}{0.03} & \val{0.439}{0.04}
& \val{1.045}{0.06} & \val{0.272}{0.07} & \val{0.396}{0.03}
& \val{0.293}{0.002} & \val{0.134}{0.001} \\

& linear
& \val{0.870}{0.00} & \valu{0.826}{0.00} & \val{0.834}{0.00}
& \val{0.740}{0.04} & \val{0.485}{0.03}
& \val{1.038}{0.03} & \val{0.327}{0.15} & \valu{0.417}{0.06}
& \valu{0.340}{0.002} & \valb{0.155}{0.001} \\
\familyrule

\grp{3}{Attention}
& no-Q
& \valu{0.872}{0.00} & \valb{0.827}{0.00} & \val{0.837}{0.01}
& \valu{0.807}{0.01} & \valu{0.591}{0.01}
& \val{1.024}{0.03} & \val{0.251}{0.14} & \val{0.400}{0.05}
& \val{0.339}{0.002} & \valb{0.155}{0.001} \\

& no-KV
& \val{0.506}{0.02} & \val{0.438}{0.00} & \val{0.505}{0.02}
& \val{0.671}{0.03} & \val{0.402}{0.03}
& \val{0.972}{0.02} & \valu{0.487}{0.08} & \val{0.367}{0.03}
& \val{0.294}{0.003} & \val{0.134}{0.001} \\

& no-bias
& \val{0.871}{0.00} & \val{0.826}{0.00} & \valu{0.840}{0.00}
& \val{0.793}{0.03} & \val{0.583}{0.03}
& \valu{1.070}{0.02} & \val{0.462}{0.15} & \valu{0.417}{0.05}
& \val{0.336}{0.001} & \valu{0.154}{0.001} \\
\midrule

\grp{1}{Ours}
& \textbf{UAT}
& \valb{0.873}{0.00} & \valb{0.827}{0.00} & \valb{0.842}{0.00}
& \valb{0.808}{0.02} & \valb{0.599}{0.01}
& \valb{1.071}{0.04} & \valb{0.554}{0.12} & \valb{0.473}{0.01}
& \valb{0.341}{0.002} & \valb{0.155}{0.001} \\
\bottomrule
\end{tabular}%
}
\end{table*}

\textbf{Ablation Variant Design.} Table~\ref{tab:ablation-design} lists the one-factor ablation variants, which could be divided into two groups. \emph{Group~I: Utility encoder} tests the necessity of each input signal and the nonlinear structure: \textbf{no-action} uses reward only; \textbf{no-reward} uses action only; \textbf{linear} replaces the MLP with a linear map. \emph{Group~II: Attention modulation} selectively disables components of the utility-affine attention: \textbf{no-Q} removes $W_Q^{(u)}$; \textbf{no-KV} removes $W_K^{(u)}$ and $W_V^{(u)}$; \textbf{no-bias} removes the utility-bias matrices $U_Q, U_K, U_V$.

\textbf{Design rationale.}
Group~I isolates the feedback inputs and nonlinear utility encoder. Group~II maps to the logit decomposition: noQ removes regime-level query modulation, noKV removes token-level key/value modulation, and noBias removes observation-independent utility offsets. Each variant disables only the targeted component and otherwise preserves the same backbone and all other settings as UAT, ensuring that performance differences isolate the corresponding component.

\textbf{Results.}\label{abr} \emph{(1) All components contribute, while full UAT remains the strongest.} As shown in Table~\ref{tab:ablation_AC}, the full UAT achieves the best overall performance across all domains, indicating that its gains do not come from any single component in isolation. The proposed architecture functions as an integrated system, and removing any one of them weakens the final design. \emph{(2) Component contributions differ.} Among attention-side ablations, removing KV pathway causes the largest degradation on almost all domains: in Domain~A, average accuracy drops from 0.847 to 0.483 in no-KV, much more severely than in no-Q (0.845) or no-bias (0.846). This may suggest that the main gains of attention retrieval come from KV-side, while Q-side and bias mainly provide additional refinement. The encoder-side ablations show that removing reward is more harmful than removing action, matching the intuition that regime shifts are primarily revealed by changed realized reward.

\subsection{Boundary and Mechanism Analyses}
\label{app:exp-extra}

This subsection records protocol differences for boundary and mechanism analyses. The relative results are fully discussed in the main text.

\textbf{Boundary Analysis: Domain A (Reward Quality Degradation).}
UAT and Vanilla-TF are trained/evaluated on gradual DarkRoom under clean rewards, Gaussian reward noise ($\sigma\in\{0.5,1.0,2.0,5.0\}$), null rewards, within-episode reward shuffling, and sign-flipped rewards. Corruption is applied to the feedback channel used by the model; the metric is per-step optimal-action accuracy over three seeds.

\textbf{Boundary Analysis: Domain D (Delay Sensitivity).}
UAT and Vanilla-TF are retrained for $d\in\{0,1,3,5,10\}$ using the same MovieLens genre-regime construction. For this boundary sweep, $d{=}0$ denotes immediate access to $r_t$ from the latest completed logged interaction; it does not expose feedback for the supervised recommendation target $a_t^*$. Other hyperparameters match Table~\ref{tab:hyperparams}; evaluation uses leave-one-out ranking with 99 negatives. The corruption study keeps the main delay $d{=}1$ and applies null/shuffle/adversarial delayed-reward corruption.

\textbf{Mechanism Analysis: Domain A: In-Episode Goal Shift (Left Panel).}
Domain~A checkpoints are rolled out without retraining on an extended cyclic DarkRoom diagnostic (one episode, $T_h{=}150$) with two hidden goal shifts. We average attention maps over 32-seed rollouts and visualize Vanilla-TF attention, UAT attention, the UAT base/modulation decomposition, and the UAT--VT logit difference.

\textbf{Mechanism Analysis: Domain C: Regime Gate on HSI (Right Panel).}
We train and evaluate UAT exclusively on the HSI dataset. Each held-out rolling window is labeled post hoc on its last day into bull, bear, shock, or chop, using information strictly after that day only for the forward-looking statistics (never fed to the model). We form an equal-weight market proxy from constituent simple returns and convert to daily log returns; let $\sigma_{63}$ be the trailing $63$-day standard deviation through the label day. \emph{Shock} takes priority if the label day's absolute log return satisfies $|r_t|/\sigma_{63}>1.5$ \emph{and} the maximum peak-to-trough drawdown over the \emph{subsequent} five trading days exceeds $1.5\%$. If not shock, \emph{bull} applies when the cumulative log return over the next $21$ trading days exceeds $+1\%$, \emph{bear} when it falls below $-1\%$, and \emph{chop} otherwise. We then extract $\|\rho_t\|_2$ at inference time; regime labels are not used for training or adaptation.

\subsection{Efficiency Analysis Experiment}\label{app:eff}

\begin{table}[t]
\centering\footnotesize
\renewcommand{\arraystretch}{1.05}
\setlength{\tabcolsep}{4pt}
\caption{Efficiency measured on Domain~A ($B{=}256$, $T{=}60$, $d_m{=}16$, $H{=}4$, single GPU).}
\label{tab:efficiency}
\begin{tabular}{@{}l r rr rr@{}}
\toprule
& & \multicolumn{2}{c}{\textbf{Training}} & \multicolumn{2}{c}{\textbf{Inference}} \\
\cmidrule(lr){3-4}\cmidrule(lr){5-6}
\textbf{Model} & \textbf{Params} & \textbf{ms/ep} & \textbf{Peak MB} & \textbf{ms/step} & \textbf{Peak MB} \\
\midrule
Vanilla-TF   & 109,828 & 17.1 & 181.4 & 2.1  & 74.9  \\
GTrXL        & 146,884 & 30.2 & 316.2 & 5.3  & 180.4 \\
TENT         & 109,828 & 14.1 & 197.6 & 11.3 & 218.7 \\
CoTTA        & 109,828 & 14.1 & 199.8 & 11.3 & 221.1 \\
DT-style     & 102,660 & 17.4 & 515.4 & 6.2  & 178.7 \\
A-DT         & 102,532 & 14.6 & 202.8 & 2.1  & 94.0  \\
\midrule
\textbf{UAT} & \textbf{155,972} & \textbf{32.4} & \textbf{253.9} & \textbf{3.1} & \textbf{112.0} \\
\bottomrule
\end{tabular}
\end{table}

\textbf{Complexity Analysis.} For hidden size $d$, $H$ heads ($d_m = d/H$), and $L$ layers, \UAT~adds the following parameters, all independent of the horizon~$T$:
\begin{itemize}[leftmargin=1.5em,itemsep=0.2em]
    \item Utility-affine projections: $3LH \cdot d_m \cdot d = 3L d^2$ (since $d_m = d/H$ and $H$ heads);
    \item Utility bias matrices: $3LH \cdot d_m^2 = 3L d_m d$;
    \item Gating readouts: $LH \cdot d_m \cdot d_u + H \cdot d_m \cdot d_u = (L+1)H d_m d_u$;
    \item Utility encoder: $|\phi_u|$ (negligible: two-layer MLP with small hidden width).
\end{itemize}
The total parameter overhead is $O(Ld^2 + Ld_m d + L H d_m d_u) = O(Ld^2)$, since $d_m = d/H$ and $d_u \ll d$ in practice. The per-forward computational cost matches standard attention, $O(BT^2 d)$ for batch size $B$ and horizon $T$, with all \UAT-specific terms being lower-order additions. Training reduces to a single parallel forward pass when actions and rewards are dataset-supplied, and adds an extra length-$T$ rollout phase when rewards depend on the model.

\textbf{Experimental Results.} Table~\ref{tab:efficiency} shows that UAT achieves its performance gains without incurring prohibitive computational cost. UAT's parameter count is comparable to GTrXL (155,972 vs.~146,884), and its training cost is nearly identical (32.4 vs.~30.2 ms/epoch). Its memory usage is substantially lower than DT-style (253.9 vs.~515.4 MB). At inference, UAT is considerably faster than TENT, GTrXL and DT-style.


\end{document}